\documentclass[sigconf]{acmart}
\AtBeginDocument{
  }

\copyrightyear{2026}
\acmYear{2026}
\setcopyright{cc}
\setcctype{by}
\acmConference[KDD '26]{Proceedings of the 32nd ACM SIGKDD Conference on Knowledge Discovery and Data Mining V.2}{August 09--13, 2026}{Jeju Island, Republic of Korea}
\acmBooktitle{Proceedings of the 32nd ACM SIGKDD Conference on Knowledge Discovery and Data Mining V.2 (KDD '26), August 09--13, 2026, Jeju Island, Republic of Korea}
\acmDOI{10.1145/3770855.3818143}
\acmISBN{979-8-4007-2259-2/2026/08}

\usepackage{algorithm}
\usepackage{algpseudocode}
\usepackage{multirow}
\usepackage{booktabs}
\usepackage{bbding}
\usepackage{pifont}
\usepackage{utfsym}
\usepackage{amsmath}
\usepackage{makecell}
\usepackage{array}
\usepackage{amsmath}
\usepackage[inline]{enumitem}
\usepackage{graphicx}
\usepackage{adjustbox}
\usepackage{csquotes}
\usepackage{newfloat}
\usepackage{listings}
\usepackage{bm}
\usepackage{graphicx}
\usepackage{subcaption}
\usepackage{makecell}
\usepackage{threeparttable}
\usepackage{hyperref}
\usepackage{colortbl}
\usepackage{amsthm}

\usepackage{diagbox}
\usepackage{caption}
\usepackage{subcaption}
\usepackage[linesnumbered,ruled,vlined,algo2e]{algorithm2e}

\newtheorem{proposition}{Proposition}

\theoremstyle{definition}

\newtheorem{assumption}{Assumption}

\usepackage{xcolor}
\usepackage[dvipsnames]{xcolor}

\begin{document}

\title{FlowTime: Towards Continuous Generative Watch Time Prediction via Flow-based Personalized Priors}

\author{Hongxu Ma}
\authornote{Both authors contributed equally to this research.}
\authornote{Work done during the internship at Kuaishou Technology.}
\affiliation{
  \institution{Fudan University}
  \city{Shanghai}
  \country{China}}
\email{hxma24@m.fudan.edu.cn}

\author{Han Zhou}
\authornotemark[1]
\affiliation{
  \institution{Shanghai University of Finance and Economics}
  \city{Shanghai}
  \country{China}}
\email{zhouhan@stu.sufe.edu.cn}

\author{Chenghou Jin}
\authornotemark[1]
\affiliation{
  \institution{Fudan University}
  \city{Shanghai}
  \country{China}}
\email{jinch24@m.fudan.edu.cn}

\author{Jie Zhang}
\affiliation{
  \institution{Kuaishou Technology}
  \city{Beijing}
  \country{China}}
\email{zhangjie39@kuaishou.com}

\author{Xiaoyu Yang}
\affiliation{
  \institution{Kuaishou Technology}
  \city{Beijing}
  \country{China}}
\email{yangxiaoyu@kuaishou.com}

\author{Chunjie Chen}
\affiliation{
  \institution{Kuaishou Technology}
  \city{Beijing}
  \country{China}}
\email{chencj517@gmail.com}

\author{Jihong Guan}
\affiliation{
  \institution{Tongji University}
  \city{Shanghai}
  \country{China}}
\email{jhguan@tongji.edu.cn}

\author{Shuigeng Zhou}
\authornote{Corresponding author.}
\affiliation{
  \institution{Fudan University}
  \city{Shanghai}
  \country{China}}
\email{sgzhou@fudan.edu.cn}
\renewcommand{\shortauthors}{Hongxu Ma et al.}

\begin{abstract}
Watch time has emerged as a pivotal metric for optimizing deep user engagement in short-video recommender systems.
However, current methods of watch time prediction (WTP)  suffer from inherent paradigm-specific limitations.
\textit{Direct Regression} faces mean-collapse due to unimodal Gaussian assumptions, while \textit{Ordinal Regression} is hampered by quantization errors from rigid discretization. 
Similarly, \textit{Discrete Generative Regression} struggles with high inference latency and heuristic vocabulary design.
Beyond these specific flaws, a shared deficiency is the inability to capture the intrinsic multimodality and heterogeneity of \textit{User-Item Interaction Patterns}.

To address these challenges, we first revisit the WTP problem from a causal perspective, and identify these user-specific patterns as structural confounders that modulate watch time outcomes, where identical interests manifest as distinct watch time outcomes conditioned on diverse user habits. Then, 
we formally propose a new (or the fourth) paradigm --- \textbf{Continuous Generative Regression}, and introduce \textbf{FlowTime}, a novel method utilizing a One-step Generative Variational Autoencoder.
FlowTime effectively circumvents the latency of iterative denoising while maintaining the expressivity of continuous latent spaces.
Furthermore, we design a \textit{Flow-based Personalized Prior} that leverages NFs to warp a standard Gaussian prior into a complex, history-conditioned manifold, thereby enabling the adaptive modeling of multimodal interaction patterns.
Finally, we build \textit{TimeRec}, the first open-source WTP Library, alongside a novel personalization metric to establish a rigorous benchmarking standard. 
Extensive offline experiments and online A/B tests demonstrate FlowTime's significant superiority over SOTA methods. 
Our code is available at 
\href{https://github.com/snailma0229/TimeRec.git}{\textcolor{Mahogany}{https://github.com/snailma0229/TimeRec.git}}.

\end{abstract}

\begin{CCSXML}
<ccs2012>
   <concept>
       <concept_id>10002951.10003317.10003347.10003350</concept_id>
       <concept_desc>Information systems~Recommender systems</concept_desc>
       <concept_significance>300</concept_significance>
       </concept>
 </ccs2012>
\end{CCSXML}

\ccsdesc[300]{Information systems~Recommender systems}

\keywords{Recommendation, Watch-time prediction, Generative Modeling}
\maketitle
\section{Introduction}
\label{sec:intro}

\begin{figure}[t]
    \centering
    \vspace{1em}
    \includegraphics[scale=0.55]{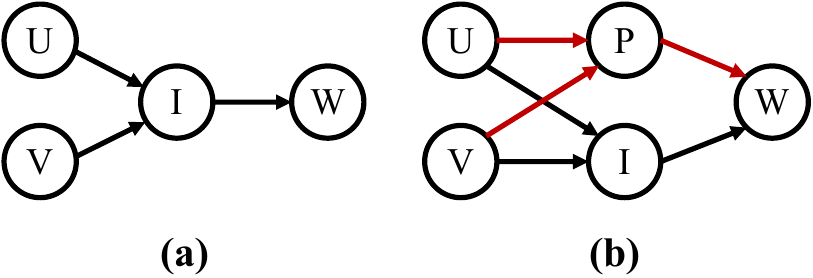}
    \caption{Causal graphs for WTP. Nodes: $U$-User, $V$-Video, $I$-Interest, $P$-Interaction Pattern, and $W$-Watch Time. 
    (a) The traditional view.
    (b) We identify interaction patterns ($P$) as structural confounders (red lines) to modulate outcomes.}
    \label{fig:causal}
    \vspace{-10pt}
\end{figure}

As short-video platforms increasingly dominate global information consumption~\cite{covington2016deep, davidson2010youtube, liu2019user, liu2021concept}, the objective of recommender systems is shifting from traditional Click-Through Rate (CTR) to deeper engagement-centric metrics~\cite{gao2022kuairand,gong2022real}, epitomized by \textit{Watch Time}.
Serving as a pivotal proxy for user satisfaction and immersion, accurate \textit{watch time prediction} (WTP) is critical for optimizing traffic allocation, enhancing user retention and further driving revenue growth~\cite{covington2016deep, wu2018beyond, yi2014beyond}.

WTP techniques evolves through three paradigms, each constrained by its specific limitations: 
(1) \textit{Direct Regression} using MSE-based optimization implicitly assumes a unimodal Gaussian prior, leading to mean-collapse on complex distributions --- a deficiency that remains unrectified despite causal debiasing~\cite{cwm,d2co,d2q,jin2026invariant}.
(2) \textit{Ordinal Regression} discretizes time for interval-wise classification~\cite{tpm, cread, PTPM, ma2026gor} but suffers from rigid quantization granularity and the erroneous assumption of conditional independence among intervals.
(3) \textit{Discrete Generative Regression} adapts language modeling to generate quantized tokens~\cite{ma2024generative}, yet is hampered by heuristic vocabulary design and high latency from autoregressive decoding.

Apart from these paradigm-specific limitations, a shared challenge lies in the intrinsic multimodality and heterogeneity of watch time distributions~\cite{egmn}.
From a causal perspective, traditional WTP techniques simply assume that WTP is solely determined by user interest as shown in Fig.~\ref{fig:causal}(a). Actually, WTP is not merely a function of interest, but influenced by the interplay between interest and \textit{User-Item Interaction Patterns}, which act as structural confounders in Fig.~\ref{fig:causal}(b). For example, as shown in Fig.~\ref{fig:pattern} (a), given the same interest, an ``aggressive'' user exhibits a bimodal distribution (immediate skip \textit{vs}. completion), whereas a ``passive'' user generates a unimodal distribution (hovering). Similarly, Fig.~\ref{fig:pattern} (b) illustrates analogous variations driven by distinct item characteristics.
This implies that identical interests map to distinct distribution topologies conditioned on user-item patterns, ultimately leading to divergent watch time outcomes. However, prior works largely neglect such pattern-driven distribution shifts, failing to capture personalized multimodal distributions and limiting predictive fidelity.

To address the limitations above, inspired by the success of generative models~\cite{ddpm,flowmatching,vae} in modeling intricate data manifolds, we formally propose the \textbf{Continuous Generative Regression Paradigm}, which allows for directly fitting complex, multimodal watch time distributions. However, adapting this paradigm to watch time modeling in recommender systems faces two critical hurdles:

\begin{enumerate}[label=\arabic*., leftmargin=13pt]
  \item \textit{Inference Latency}: The iterative denoising mechanism of mainstream generative models, such as diffusion models~\cite{ddpm}, conflicts with the stringent latency constraints of online systems.
  \item \textit{Prior Mismatch}: The standard Gaussian prior underlying vanilla generative models fails to capture the highly heterogeneous and personalized user-item behavioral patterns.
\end{enumerate}

In response, we propose \textbf{FlowTime}, a novel continuous generative framework. For latency, we adopt a One-step Variational Autoencoder (VAE) architecture, bypassing costly iterative processes to ensure real-time inference efficiency~\cite{lan2025acam,lan2026reco} while learning continuous latent space to maintain expressive power~\cite{zhu2025ett}. To address prior mismatch, we design a \textit{Flow-based Personalized Prior}. 
Conditioned on historical user-item interaction patterns, we leverage NFs to learn bijective transformations that warp the generic Gaussian latent space into a pattern-specific prior manifold.
This personalized topology enables adaptively capturing diverse and complex preference distributions, thereby establishing more accurate ordinal relationships for precise WTP.
Extensive offline and online experiments demonstrate the superiority of our approach across standard accuracy and ranking metrics, as well as our proposed novel metric to quantify personalization fidelity.

Furthermore, to address widespread inconsistencies in data preprocessing and evaluation protocols, we construct the field's first open-source Watch-Time Prediction Library---\textit{TimeRec} . By standardizing evaluation protocols and integrating reproduced SOTA methods, TimeRec provides a rigorous baseline for fair and consistent benchmarking~\cite{wu2026knowme}.

\begin{figure}[t]
    \centering
    \includegraphics[scale=0.62]{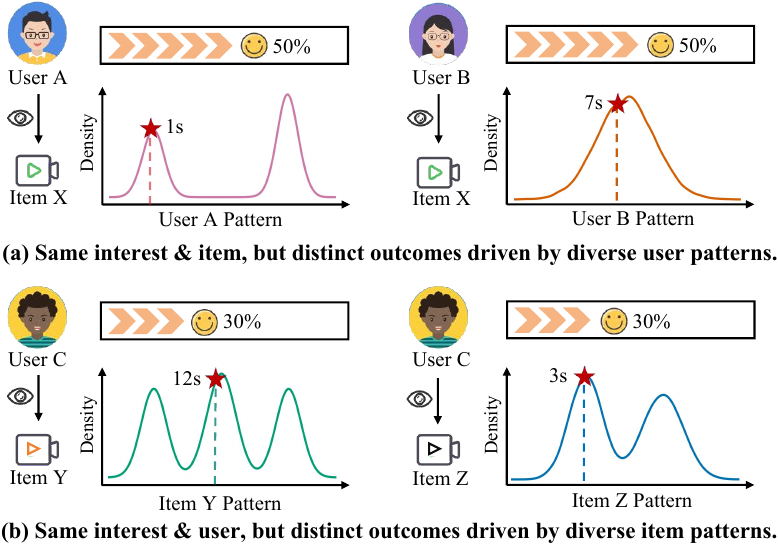}
    \vspace{-1em}
    \caption{Impact of interaction patterns on watch time. 
    Predictions for identical interest levels diverge based on (a) user habits and (b) item characteristics, highlighting the necessity of pattern-aware modeling.}
    \label{fig:pattern}
    \vspace{-1em}
\end{figure}

The main contributions are summarized as follows:
\begin{itemize}[leftmargin=12pt]
  \item We formally propose \textit{Continuous Generative Regression} as a new paradigm for WTP. From a causal perspective, we identify the structural modulation of user-item patterns on WTP, and provide the theoretical superiority of this paradigm in overcoming the intrinsic limitations of traditional regression and discretization paradigms.
  \item We introduce FlowTime, the first framework for the new paradigm. By integrating One-step Generation with a Flow-based Personalized Prior, it reshapes the prior space into pattern-driven manifolds, achieving precise and adaptive modeling of heterogeneous patterns without sacrificing inference efficiency.
  \item Extensive offline experiments and online A/B tests demonstrate that FlowTime significantly outperforms existing state-of-the-art methods. We also provide in-depth analysis of key factors and underlying mechanisms driving the performance gains..
  \item Last but not least, we build the first open-source WTP Library \textit{TimeRec}. Integrating our novel personalization metric, this toolkit provides a rigorous baseline for fair and consistent benchmarking.
\end{itemize}

\section{Related Work}
\label{sec:relat}
\subsection{Watch Time Prediction}
WTP aims to estimate the video watch time based on the user's profile and video characteristics~\cite{zhang2025multi,ma2025fine,ma2025ms}. Existing techniques  fall into three paradigms: (1) \textit{Direct Regression} treats WTP as a point-wise regression task, typically optimized via MSE and often augmented with causal debiasing~\cite{d2co, d2q, cwm,jin2026invariant}. However, the underlying Gaussian assumption inevitably leads to mean-collapse. 
While methods like EGMN~\cite{egmn} attempt to fit mixtures of pre-defined distributions, their performance remains bounded by restrictive parametric assumptions. (2) \textit{Ordinal Regression}~\cite{tpm,cread,swat,PTPM,ma2026gor} transforms the continuous regression problem into a classification task by discretizing time into intervals, suffering from the conditional independence assumption between intervals and rigid predictions due to fixed quantization granularity. (3) \textit{Discrete Generative}~\cite{ma2024generative} adapts the language modeling to reformulate WTP as a sequence generation task by quantizing continuous time into discrete tokens for autoregressive prediction, which is sensitive to heuristic vocabulary design and incurs prohibitive inference latency.
Moreover, these methods collectively fail to capture the intrinsic multimodality and heterogeneity of user-item interaction patterns --- a core challenge that this paper tries to address by proposing a new  \textit{Continuous Generative Paradigm} and the FlowTime framework.

\subsection{Generative Recommendation}
Generative recommendation aims to model the underlying distribution of user-item interactions to directly generate personalized results. 
Early solutions predominantly focused on discrete sequence modeling, utilizing RNNs~\cite{gru4rec} or Transformers~\cite{bert4rec,sasrec} for autoregressive next-item prediction.
GR~\cite{ma2024generative} pioneered the application of this paradigm to WTP by quantizing continuous time into discrete tokens for stepwise autoregressive decoding.
Recently, continuous generative models like VAEs~\cite{vae,kingma2019introduction} and diffusion models~\cite{ddpm, flowmatching} have gained traction for their superior density estimation capabilities, widely employed to generate high-fidelity item or user representations~\cite{diffurec, dreamrec, dimerec}. 
However, these approaches have focused almost exclusively on item generation or representation augmentation, and their application to WTP remains unexplored.

\subsection{Normalizing Flows Modeling}
Normalizing Flows (NFs) were introduced to enhance variational inference through sequences of invertible and differentiable transformations, enabling expressive density modeling with exact likelihood computation~\cite{rezende2015variational,papamakarios2021normalizing}.
As a general framework for high-dimensional density estimation, NFs have driven significant progress in diverse areas, including architectural innovations like RealNVP~\cite{dinh2017density}, continuous probability path modeling via flow matching~\cite{flowmatching}, and representation learning~\cite{papamakarios2021normalizing}.
While NFs have recently emerged in recommendation systems for modeling user preferences~\cite{liang2018variational,liu2025flow}, their potential for WTP remains unexplored.
By leveraging their capability to model complex latent manifolds, we first attempt to harness NFs for WTP and introduce a continuous generative paradigm tailored to capture the heterogeneous interaction patterns.

\section{Existing Modeling Paradigms Revisited}

\subsection{Mean Collapse in Conventional Regression}

A common approach for continuous value prediction is point-wise regression with mean squared error (MSE),
\begin{equation}
\min_f \; \mathbb{E}_{\mathbf{x},y}\big[\|y - f(\mathbf{x})\|^2\big].
\end{equation}
The optimal solution of this objective is given by the conditional expectation:
\begin{equation}
f^*(\mathbf{x}) = \mathbb{E}_{P_{\text{data}}}[y \mid \mathbf{x}],
\end{equation}
which follows from the first-order optimality condition of the MSE objective.
Despite being optimal in expectation, the MSE-optimal predictor may collapse to an average that lies in a low-density region when the conditional distribution is multi-modal.

\begin{proposition}[Mean Collapse Effect]
\label{prop:mean-collapse}
Let $\Delta=\min_{i\neq j}|\mu_i-\mu_j|$ denote the minimum separation between any two modes.
If the modes are well separated, i.e., $\Delta/\sigma \to \infty$, then the likelihood of the regression solution under the true conditional distribution vanishes:
\begin{equation}
\lim_{\Delta / \sigma \to \infty}
P_{\text{data}}\!\left(f^*(\mathbf{x}) \mid \mathbf{x}\right)
=
0.
\end{equation}
\end{proposition}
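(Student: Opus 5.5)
We read the statement under its implicit model: a Gaussian mixture conditional $P_{\text{data}}(y\mid\mathbf{x})=\sum_{k=1}^K \pi_k\,\mathcal{N}(y;\mu_k,\sigma^2)$. The $K\ge 2$ mixing weights $\pi_k$ are fixed and nondegenerate, with $\pi_{\min}=\min_k\pi_k>0$. All components share a common scale $\sigma$.

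Since the density scales like $1/\sigma$, the statement only makes sense in standardized units. We therefore interpret $P_{\text{data}}(f^*(\mathbf{x})\mid\mathbf{x})$ as the density relative to its peak, or equivalently the density of $(y-\mu_1)/\sigma$, so that $\sigma$ can be fixed and $\Delta\to\infty$ sent. This normalization is the one point we will have to state explicitly, because without it the limit is not scale-invariant.

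The plan has three steps.

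\textbf{Step 1.} By the stated first-order condition, $f^*(\mathbf{x})=\bar\mu:=\sum_k\pi_k\mu_k$. This is a convex combination of the modes.

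\textbf{Step 2 (main obstacle).} We lower-bound the distance from $\bar\mu$ to every mode by a constant multiple of $\Delta$. This fails in general: with three modes at $-1,0,1$ and symmetric weights, $\bar\mu$ sits exactly on the middle mode. So the proposition needs an additional hypothesis. We will impose one of the following:
\begin{itemize}
\item $K=2$, in which case $|\bar\mu-\mu_1|=\pi_2\Delta$ and $|\bar\mu-\mu_2|=\pi_1\Delta$, so $\min_k|\bar\mu-\mu_k|\ge\pi_{\min}\Delta$;
\item or, for general $K$, the assumption that $\bar\mu$ stays at least $c\Delta$ away from every $\mu_k$ for some fixed $c>0$. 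This is the ``mean lies in a low-density region'' premise the text alludes to.
\end{itemize}
First we will try to derive this bound from the weights for $K=2$. For general $K$ we will state it as the separation assumption.

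\textbf{Step 3.} With $\min_k|\bar\mu-\mu_k|\ge c\Delta$, bound the density termwise:
\[
P_{\text{data}}(\bar\mu\mid\mathbf{x})\le\sum_k\pi_k\frac{1}{\sqrt{2\pi}\sigma}e^{-c^2\Delta^2/(2\sigma^2)}=\frac{1}{\sqrt{2\pi}\sigma}e^{-c^2(\Delta/\sigma)^2/2}.
\]
With $\sigma$ fixed (or after standardization) this tends to $0$ as $\Delta/\sigma\to\infty$. Relative to the peak density, which is at least $\pi_{\min}/(\sqrt{2\pi}\sigma)$, the ratio is at most $e^{-c^2(\Delta/\sigma)^2/2}/\pi_{\min}\to0$. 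This is the scale-free form of the conclusion.
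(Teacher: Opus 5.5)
Your proof is correct and has the same skeleton as the paper's. You identify $f^*(\mathbf{x})$ as the mixture mean $\bar\mu=\sum_k\pi_k\mu_k$, evaluate the mixture density at $\bar\mu$, and let the Gaussian tail drive it to zero. The difference is in scope and care. The paper's appendix only treats $K=2$ with $\pi_1=\pi_2=1/2$. There $\bar\mu$ is the midpoint, and the density is computed exactly as $\frac{1}{\sqrt{2\pi}\sigma}\exp\bigl(-\Delta^2/(8\sigma^2)\bigr)$. This is your Step~3 bound with $c=1/2$, attained with equality. Your termwise bound with $c=\pi_{\min}$ extends this to arbitrary nondegenerate weights at no extra cost. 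Your three-mode counterexample also correctly shows that the statement as written is false without an added hypothesis such as yours, since it uses $\Delta$ as the \emph{minimum} pairwise separation for general $K$. The paper never addresses this and silently proves only the symmetric bimodal case. Similarly, the paper concludes from $\Delta^2/(8\sigma^2)\to\infty$ alone and ignores the $1/\sigma$ prefactor. That is harmless if $\sigma$ is fixed or bounded below. But if $\sigma\to0$ fast enough, say $\sigma=e^{-r^2/4}$ with $r=\Delta/\sigma$, the raw density at $\bar\mu$ diverges. So your standardization, or alternatively a lower bound on $\sigma$, is a genuine repair rather than pedantry. The paper's version gives an exact closed form in the special case; yours gives a correctly scoped, scale-consistent statement.
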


It shows that point-wise regression collapses to a low-density prediction that fails to reflect the underlying ordinal structure. The formal proof is provided in Appendix~\ref{app:regtheory}.

\subsection{Limitations of Ordinal Regression}

Ordinal regression methods (e.g., CREAD~\cite{cread}, SWaT~\cite{swat}) discretize the continuous watch time $y$ into a sequence of ordered binary decisions using predefined thresholds
$c_1 < \cdots < c_M$, where $\mathbf{B}^m = \mathbb{I}(y > c_m), m=1,\dots,M $.
This transformation encodes $y$ as a length-$M$ binary vector $\mathbf{B} = (\mathbf{B}^1,\dots,\mathbf{B}^M)$, from which the conditional expectation is approximated by discretization:
\begin{equation}
\mathbb{E}(y \mid \mathbf{x})
\approx
\sum_{m=1}^{M} P(y > c_m \mid \mathbf{x}) \, (c_m - c_{m-1}),
\label{eq:ordinal_expectation}
\end{equation}
with $c_0=0$.
This reformulation reduces continuous regression to multiple binary classification problems, but introduces structural bias due to quantization and the implicit assumption of conditional independence across interval decisions ~\cite{ma2024generative}.

\begin{proposition}[Dependency Error in Discretized Modeling]
\label{pro:disError}
Let $P_{\text{data}}(\mathbf{B} \mid \mathbf{x})$ denote the true joint distribution over interval decisions.
A naive discretization model assumes independence,
\begin{equation}
    P_{\text{naive}}(\mathbf{B} \mid \mathbf{x}) = \prod_{m=1}^{M} P(\mathbf{B}^m \mid \mathbf{x}).
\end{equation}

The modeling error decomposes as
\begin{equation}
\begin{aligned}
D_{\mathrm{KL}}
\!\left(
P_{\text{data}} \,\|\, P_{\text{naive}}
\right)
&=\sum_{m=1}^{M} \mathbb{E}_{\mathbf{B}_i^{<m}}  \left[ D_{KL}^{(m)} \right],
\end{aligned}
\end{equation}
 where $D_{KL}^{(m)}$ denotes the KullbackLeibler (KL) divergence between the conditional distribution of the $m$-th interval decision given its history and the marginal distribution that ignores such dependency.
\end{proposition}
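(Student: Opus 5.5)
The approach is the chain rule for KL divergence, applied to the autoregressive factorization of the true joint. Fix $\mathbf{x}$ throughout. Write $\mathbf{B}^{<m}=(\mathbf{B}^1,\dots,\mathbf{B}^{m-1})$. Without any independence assumption, the true distribution factorizes as
\begin{equation*}
P_{\text{data}}(\mathbf{B}\mid\mathbf{x})=\prod_{m=1}^{M}P_{\text{data}}(\mathbf{B}^m\mid\mathbf{B}^{<m},\mathbf{x}).
\end{equation*}
The naive model is the product of the marginals $P(\mathbf{B}^m\mid\mathbf{x})$. I take these to be the true marginals $P_{\text{data}}(\mathbf{B}^m\mid\mathbf{x})$, which is what a per-interval classifier recovers at its optimum. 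With that reading, the log-ratio inside the KL splits into a sum of per-coordinate terms:
\begin{equation*}
\log\frac{P_{\text{data}}(\mathbf{B}\mid\mathbf{x})}{P_{\text{naive}}(\mathbf{B}\mid\mathbf{x})}=\sum_{m=1}^{M}\log\frac{P_{\text{data}}(\mathbf{B}^m\mid\mathbf{B}^{<m},\mathbf{x})}{P_{\text{data}}(\mathbf{B}^m\mid\mathbf{x})}.
\end{equation*}

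The next step is to take the expectation under $P_{\text{data}}(\mathbf{B}\mid\mathbf{x})$ and use linearity. The $m$-th summand depends only on $(\mathbf{B}^{<m},\mathbf{B}^m)$, so the later coordinates integrate out. By the tower property, the remaining expectation becomes an outer expectation over $\mathbf{B}^{<m}\sim P_{\text{data}}(\cdot\mid\mathbf{x})$ and an inner expectation over $\mathbf{B}^m\sim P_{\text{data}}(\cdot\mid\mathbf{B}^{<m},\mathbf{x})$. The inner expectation is exactly
\begin{equation*}
D_{KL}^{(m)}=D_{\mathrm{KL}}\!\left(P_{\text{data}}(\mathbf{B}^m\mid\mathbf{B}^{<m},\mathbf{x})\,\big\|\,P_{\text{data}}(\mathbf{B}^m\mid\mathbf{x})\right).
\end{equation*}
This gives the claimed decomposition. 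For $m=1$ the history is empty and that term vanishes. Equivalently, each term is the conditional mutual information $I(\mathbf{B}^m;\mathbf{B}^{<m}\mid\mathbf{x})$, which is nonnegative and zero only under genuine independence.

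The main obstacle is interpretive rather than computational: deciding what $P(\mathbf{B}^m\mid\mathbf{x})$ in the naive model means. If it is an arbitrary learned marginal $Q_m$ rather than the true marginal, the same computation gives an additional nonnegative term $\sum_m D_{\mathrm{KL}}(P_{\text{data}}(\mathbf{B}^m\mid\mathbf{x})\,\|\,Q_m)$. That term comes from inserting and subtracting $\log P_{\text{data}}(\mathbf{B}^m\mid\mathbf{x})$. I would state this explicitly, and note that the displayed identity is the case where the marginals are correct, so the error comes purely from the ignored dependency. A minor further point is that the thresholds force $\mathbf{B}^m\ge\mathbf{B}^{m+1}$, so the conditionals can be degenerate. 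I would therefore use the convention $0\log 0=0$ and verify absolute continuity: the support of $P_{\text{data}}$ lies within the support of the product of its marginals, so every term is finite.
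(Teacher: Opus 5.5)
Your proposal is correct and follows essentially the same route as the paper: factor $P_{\text{data}}$ autoregressively, split the log-ratio into per-interval terms, and use linearity plus the tower property to get the expected conditional KL, which is the conditional mutual information $I(\mathbf{B}^m;\mathbf{B}^{<m}\mid\mathbf{x})$. The paper leaves implicit, by reusing the notation $P(\mathbf{B}^m\mid\mathbf{x})$, that the naive factors are the true marginals; your remark that otherwise an extra $\sum_m D_{\mathrm{KL}}(P_{\text{data}}(\mathbf{B}^m\mid\mathbf{x})\,\|\,Q_m)$ term appears makes that assumption precise.
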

The proof is provided in Appendix~\ref{app:creadtheory}.

\subsection{Limitations of Discrete Generative Modeling}
Autoregressive (AR) models~\cite{ma2024generative} mitigate the dependency issue by adopting a generative formulation over a sequence of \emph{value tokens}, rather than independent bucket decisions.
It represents the continuous target $y$ by a sequence $\mathbf{s}=(s^{1},\dots,s^{T})$ where $t$ indexes the generation step and $T$ denotes the sequence length. The conditional distribution is modeled via the AR factorization
$P(\mathbf{s}\mid \mathbf{x})=\prod_{t=1}^{T}P(s^{t}\mid \mathbf{x}, s^{<t})$.
With a numeric decoding map $\phi(\cdot)$, the regression target and prediction are given by $y=\sum_{t=1}^{T}\phi(s^{t})$ and $\hat{y}=\sum_{t=1}^{T}\phi(\hat{s}^{t})$,
where $s^{t}$ and $\hat{s}^{t}$ denote the ground-truth and predicted value tokens at step $t$, respectively.
Accordingly, we define the step-wise regression error as
$\Delta_t \triangleq \phi(\hat{s}^{t}) - \phi(s^{t})$.

\begin{proposition}[Error Decompopstion in Tokenized Autoregressive Regression]
\label{pro:ar_error}

Assume that $\phi(s^{t}), \phi(\hat{s}^{t}) \in [w_{\min}, w_{\max}]$,
where $[w_{\min}, w_{\max}]$ denotes the numeric range induced by the token vocabulary, and that the step-wise bias satisfies
$|\mathbb{E}[\Delta_t]| \le B$ for all $t$. Then, the expected squared error of the AR prediction
$\hat{y} \triangleq \sum_{t=1}^{T} \phi(\hat{s}^{t})$
is upper bounded by
\begin{equation}
\mathbb{E}\!\left[(\hat{y}-y)^2\right]
\;\le\;
T^{2} B^{2}
+
T^{2}\frac{(w_{\max}-w_{\min})^{2}}{4}.
\end{equation}
\end{proposition}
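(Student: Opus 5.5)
We prove the bound with a bias--variance decomposition of the total error. The total error is $\hat y - y = \sum_{t=1}^T \Delta_t$. Write $\Delta_t = \mathbb{E}[\Delta_t] + \tilde\Delta_t$ with $\tilde\Delta_t \triangleq \Delta_t - \mathbb{E}[\Delta_t]$ centered. Then
\begin{equation*}
\mathbb{E}\big[(\hat y - y)^2\big] = \Big(\sum_{t=1}^T \mathbb{E}[\Delta_t]\Big)^2 + \mathrm{Var}\Big(\sum_{t=1}^T \Delta_t\Big),
\end{equation*}
which is the standard identity $\mathbb{E}[Z^2] = (\mathbb{E}Z)^2 + \mathrm{Var}(Z)$ applied to $Z=\hat y - y$. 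We then bound the two terms separately, obtaining $T^2B^2$ and $T^2 (w_{\max}-w_{\min})^2/4$ respectively.

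For the bias term, the triangle inequality together with the assumption $|\mathbb{E}[\Delta_t]|\le B$ gives
\begin{equation*}
\Big|\sum_t \mathbb{E}[\Delta_t]\Big| \le TB,
\end{equation*}
so the squared bias is at most $T^2B^2$.

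For the variance term, the autoregressive steps are in general dependent, so we do not use an independence argument. We expand the variance into covariances and apply Cauchy--Schwarz to each one:
\begin{equation*}
\mathrm{Var}\Big(\sum_t \Delta_t\Big) = \sum_{t,t'} \mathrm{Cov}(\Delta_t,\Delta_{t'}) \le \Big(\sum_t \sqrt{\mathrm{Var}(\Delta_t)}\Big)^2 \le T^2 \max_t \mathrm{Var}(\Delta_t).
\end{equation*}
This costs a factor $T^2$ rather than $T$, which matches the stated bound and avoids any independence assumption.

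The remaining step, which I expect to be the main obstacle, is bounding each $\mathrm{Var}(\Delta_t)$ by $(w_{\max}-w_{\min})^2/4$. Popoviciu's inequality gives $\mathrm{Var}(X)\le (b-a)^2/4$ for any $X\in[a,b]$. However, $\Delta_t = \phi(\hat s^t)-\phi(s^t)$ only lies in $[-(w_{\max}-w_{\min}),\, w_{\max}-w_{\min}]$, and applying Popoviciu to that range gives the weaker bound $(w_{\max}-w_{\min})^2$.

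My first attempt is to use the conditional structure of the regression error. Condition on $\mathbf{x}$ and the history, where the ground-truth token $s^t$ is treated as fixed by the data. The randomness of $\Delta_t$ then comes only from $\phi(\hat s^t)\in[w_{\min},w_{\max}]$, and Popoviciu applies with the stated constant. The law of total variance then controls the unconditional variance.

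If that step does not go through cleanly, the fallback is to interpret the bound as controlling the predictive spread of $\phi(\hat s^t)$. Only the constant in front of $T^2 (w_{\max}-w_{\min})^2$ would change; the $T^2$ scaling of both terms, which is the point of the proposition, is unaffected.

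Combining the bias bound $T^2B^2$ with the variance bound $T^2 (w_{\max}-w_{\min})^2/4$ gives the claimed inequality.
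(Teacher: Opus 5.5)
Your skeleton matches the paper's: the identity $\mathbb{E}[(\hat y-y)^2]=(\sum_t\mathbb{E}[\Delta_t])^2+\mathrm{Var}(\sum_t\Delta_t)$, the bias bound $T^2B^2$, and $\mathrm{Var}(\sum_t\Delta_t)\le(\sum_t\sqrt{\mathrm{Var}(\Delta_t)})^2\le T^2\max_t\mathrm{Var}(\Delta_t)$. Your Cauchy--Schwarz step is correct and tidier than the paper's pair count, which undercounts ordered pairs by a factor of two but still reaches $T^2$.

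The gap is the step you flagged, and your repair does not close it. Write $\mathcal{F}$ for $\mathbf{x}$ together with the history. Conditioning does give $\mathrm{Var}(\Delta_t\mid\mathcal{F})=\mathrm{Var}(\phi(\hat s^t)\mid\mathcal{F})\le(w_{\max}-w_{\min})^2/4$. But the law of total variance then adds $\mathrm{Var}(\mathbb{E}[\Delta_t\mid\mathcal{F}])=\mathrm{Var}(\mathbb{E}[\phi(\hat s^t)\mid\mathcal{F}]-\phi(s^t))$, and no hypothesis controls this term.

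Under the unconditional reading the stated constant is in fact false. Take $T=1$ and let $\phi(s^1)$ and $\phi(\hat s^1)$ be independent and uniform on $\{w_{\min},w_{\max}\}$. Then $\mathbb{E}[\Delta_1]=0$, so every $B\ge 0$ is admissible. Yet $\mathbb{E}[(\hat y-y)^2]=\mathrm{Var}(\Delta_1)=(w_{\max}-w_{\min})^2/2$, which exceeds $B^2+(w_{\max}-w_{\min})^2/4$ whenever $B<(w_{\max}-w_{\min})/2$. With $\phi(\hat s^1)=w_{\min}+w_{\max}-\phi(s^1)$ the error even reaches $(w_{\max}-w_{\min})^2$. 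Your fallback changes the constant, so it proves a weaker inequality, not the stated one.

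The paper's proof has the same hole. It applies Popoviciu to $\Delta_t$ only because $\hat C^t$ and $C^t$ lie in $[w_{\min},w_{\max}]$, which is exactly the invalid inference you noticed.

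The stated constant holds only if the whole proposition is read conditionally on the sample $(\mathbf{x},y)$. Expectations, variances and the bias hypothesis are then all taken over the model's decoding randomness, with the ground-truth tokens $s^{1:T}$ fixed. In that reading $\phi(s^t)$ is a constant, and Popoviciu gives $\mathrm{Var}(\Delta_t)=\mathrm{Var}(\phi(\hat s^t))\le(w_{\max}-w_{\min})^2/4$ directly. No total-variance step is needed, and conditioning on the history is irrelevant, since $\phi(\hat s^t)\in[w_{\min},w_{\max}]$ holds unconditionally.

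To make your proof correct, either adopt that conditional reading explicitly as a hypothesis, or replace $(w_{\max}-w_{\min})^2/4$ by $(w_{\max}-w_{\min})^2$ in the conclusion.
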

Since the bias term $B$, sequence length $T$, and value range $(w_{\max}-w_{\min})$ all depend on the chosen vocabulary, heuristic discretization imposes an intrinsic limitation on the accuracy of AR models. A formal proof and further analysis is provided in Appendix~\ref{app:arlim}.

\begin{figure*}[t]
    \centering
    \includegraphics[width=0.95\linewidth]{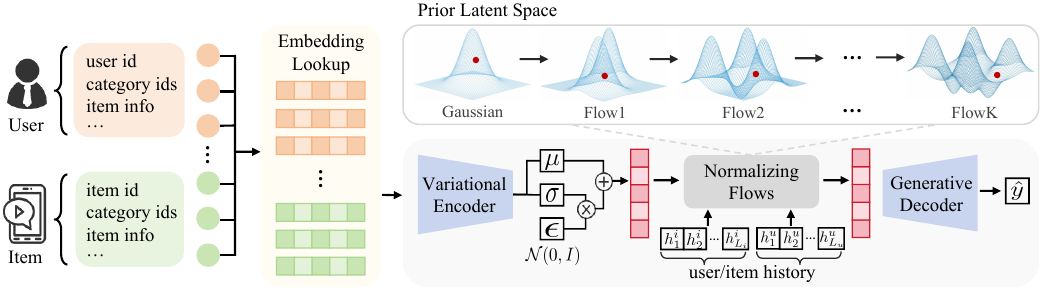}
    \caption{The overall architecture of FlowTime. The framework employs a Variational Encoder to map inputs into a stochastic latent space. Crucially, we introduce a Flow-based Personalized Prior, which leverages Normalizing Flow conditioned on user/item interaction histories to progressively warp a standard Gaussian into a complex, pattern-specific latent manifold, enabling the Generative Decoder to yield precise watch time predictions.}
    \label{fig:framework}
\end{figure*}

\section{Method}
\label{sec:method}

\subsection{Problem Formulation}
Let $\mathcal{D}=\{(\mathbf{u}_i, \mathbf{v}_i, y_i)\}_{i=1}^{N}$ denote a dataset of $N$ samples, where $\mathbf{u}_i, \mathbf{v}_i \in \mathbb{R}^{d}$ denote the user and video feature vectors, respectively, where $d$ is the feature dimension. $y_i \in \mathbb{R}^+$ denotes the observed watch time. Each user and item has their historical watch time sequences, denoted as $\mathcal{H}^u_i = \{h^u_1, \dots, h^u_{L_u}\}$ and $\mathcal{H}^v_i = \{h^v_1, \dots, h^v_{L_v}\}$, respectively. Here, $L_u$ and $L_v$ denote the truncated lengths of the historical sequences.
WTP aims to learn a deterministic mapping $f: \mathbf{x} \to \mathbb{R}^+$, where $\mathbf{x}=(\mathbf{u},\mathbf{v})$, to minimize the discrepancy between the predicted value $\hat{y}$ and ground truth $y$.


Departing from the traditional point estimation paradigm, we reformulate WTP as a continuous generative modeling problem. Our goal is to learn the true underlying conditional probability density $p_{\text{data}}(y \mid \mathbf{x})$, where $y$ is treated as a continuous random variable. This objective is achieved by maximizing the marginal likelihood of the observed watch time $y$ conditioned on $\mathbf{x}$:
\begin{equation}
p_\theta(y \mid \mathbf{x})
=
\int_{\mathcal{Z}}
p_\theta(y \mid z, \mathbf{x}) \,
p_\psi(z \mid \mathbf{x})
\, dz,
\end{equation}
where $z \in \mathbb{R}^{d}$ denotes a latent variable, $\theta$ parameterizes the conditional likelihood (decoder), and $\psi$ parameterizes the conditional prior over the latent space. In essence, this formulation allows the framework to model the multimodality and heterogeneity prevalent in real-world continuous watch time distributions.

\subsection{Overall Architecture of FlowTime}
As illustrated in Fig.~\ref{fig:framework}, FlowTime is built upon a VAE backbone, designed for efficient one-step continuous generative regression.
The framework comprises two principal components: a Probabilistic Encoder that approximates the posterior distribution of the latent variables given the target and user/video features, and a Generative Decoder that reconstructs the watch time distribution from the latent space.

\subsubsection{Probabilistic Encoder}
The encoder approximates the intractable true posterior of the latent variable by conditioning the prior $p_\psi(\mathbf{z} \mid \mathbf{x})$ on the observed watch time. 
Formally, given the static features $\mathbf{x}_i = (\mathbf{u}_i, \mathbf{v}_i)$ and the target $y_i$, the encoder models the approximate posterior distribution $q_{\phi}(\mathbf{z} \mid y_i, \mathbf{x}_i)$ as a multivariate Gaussian:
\begin{equation}
q_{\phi}(\mathbf{z} \mid y_i, \mathbf{x}_i) = \mathcal{N}(\mathbf{z}; \mathbf{\mu}_\phi(y_i, \mathbf{x}_i), \mathbf{\sigma}^2_\phi(y_i, \mathbf{x}_i)\mathbf{I}),
\end{equation}
where $\mathbf{\mu}_\phi(\cdot)$ and $\mathbf{\sigma}_\phi(\cdot)$ are Multi-Layer Perceptrons (MLPs) that predict the mean and diagonal covariance of the latent variable $\mathbf{z} \in \mathbb{R}^d$. To enable backpropagation through the stochastic sampling process, we employ the reparameterization trick~\cite{kingma2013auto}. Specifically, the latent variable $\mathbf{z}$ is sampled as:
\begin{equation}
\mathbf{z} = \mathbf{\mu}_\phi(y_i, \mathbf{x}_i) + \mathbf{\sigma}_\phi(y_i, \mathbf{x}_i) \odot \mathbf{\epsilon}, \quad \text{where} \quad \mathbf{\epsilon} \sim \mathcal{N}(\mathbf{0}, \mathbf{I}),
\end{equation}
where $\odot$ denotes the element-wise product. This transformation decouples the stochastic noise $\mathbf{\epsilon}$ from the network parameters, allowing for end-to-end optimization.

\subsubsection{Generative Decoder}
The decoder functions as the generative network, aiming to reconstruct the target watch time $y_i$ from the sampled latent variable $\mathbf{z}$, conditioned on the static features $\mathbf{x}_i$. We model the likelihood distribution $p_{\theta}(y_i \mid \mathbf{z}, \mathbf{x}_i)$ as:
\begin{equation}
p_{\theta}(y_i \mid \mathbf{z}, \mathbf{x}_i) = \mathcal{N}(y_i; \mathbf{\mu}_\theta(\mathbf{z}, \mathbf{x}_i), \mathbf{\sigma}^2_\theta(\mathbf{z}, \mathbf{x}_i)),
\end{equation}
where $\mathbf{\mu}_\theta$ and $\mathbf{\sigma}_\theta$ are MLP-based networks mapping the latent feature space back to the target space. 
Following the standard VAE formulation, during training the latent variable is sampled from the approximate posterior $\mathbf{z} \sim q_\phi(\mathbf{z}\mid y_i,\mathbf{x}_i)$, which enables reconstruction-based learning of the generative model. During inference,  the latent variable is instead sampled from the conditional prior $\mathbf{z} \sim p_\psi(\mathbf{z}\mid \mathbf{x}_i)$.
For efficient deterministic prediction, we often simplify this to predicting the mean expectation $\hat{y}_i = \mathbf{\mu}_\theta(\mathbf{z}, \mathbf{x}_i)$.

\subsubsection{Optimization Objective}
The model is trained by maximizing the Evidence Lower Bound (ELBO) of the log-likelihood, and the objective is formulated as:
\begin{equation}
\label{eq:elbo}
\mathcal{L}_{\text{VAE}}
=
\mathbb{E}_{q_\phi(\mathbf{z} \mid y_i, \mathbf{x}_i)}
\left[
\log p_\theta(y_i \mid \mathbf{z}, \mathbf{x}_i)
\right]
-
\text{KL}\!\left(
q_\phi(\mathbf{z} \mid y_i, \mathbf{x}_i)
\,\|\, 
p_\psi(z \mid \mathbf{x_i})
\right),
\end{equation}
where the first term represents the reconstruction loss by Mean Squared Error for Gaussian likelihoods, and the second term is the KL divergence regularizing the posterior towards a prior distribution $p_\psi(\mathbf{z} \mid \mathbf{x_i})$.

However, in standard VAE, the prior  $p_\psi(z \mid \mathbf{x_i})$ is typically assumed to be a standard isotropic Gaussian, i.e., $p(\mathbf{z} \mid \mathbf{x_i}) \sim \mathcal{N}(\mathbf{0}, \mathbf{I})$. While this assumption simplifies the KL computation, it imposes a strong topological constraint, forcing the latent space to be unimodal and globally shared across all users.
However, a static, user-agnostic Gaussian prior is fundamentally ill-suited for such distributional heterogeneity, often leading to over-smoothed predictions or 'posterior collapse'. To address this, we introduce a Flow-based Personalized Prior, which replaces the static $p(\mathbf{z} \mid \mathbf{x_i})$ by explicitly conditioning the latent density on historical interaction patterns.

\subsection{Flow-based Personalized Prior}
To transcend the expressivity limitations of the isotropic Gaussian prior inherent in standard VAEs, we propose a Flow-based Personalized Prior. This module leverages NFs to construct a complex, multimodal prior distribution $p(\mathbf{z} \mid \mathbf{x}_i)$ explicitly conditioned on historical interaction patterns. By learning a sequence of invertible transformations driven by user and item histories, the model adaptively warps a simple base density into a structured, pattern-specific latent manifold.
\subsubsection{Pattern Encoding}
First, to capture the distributional characteristics of historical interactions, we represent the watch time histories of the user ($\mathcal{H}^u_i$) and the item ($\mathcal{H}^v_i$) via their quantile statistics. Let $\mathbf{q}^u = \{q^u_1, \dots, q^u_L\}$ and $\mathbf{q}^v = \{q^v_1, \dots, q^v_L\}$ denote the sequences of $L$ quantiles extracted from $\mathcal{H}^u_i$ and $\mathcal{H}^v_i$, respectively. 
These quantile-based representations encode the global shape of historical watch-time distributions. Together with the static user and item features, they form the contextual condition used for prediction, which we define as $\mathbf{c} = [\mathbf{q}^u, \mathbf{q}^v]$ as the condition for prediction. These ordered sequences provide a compact yet robust summary of long-term viewing patterns.
We employ a Transformer-based architecture to encode these quantile sequences into dense contextual embeddings. Specifically, the quantiles are projected and augmented with positional encodings to preserve their ordinal structure:
\begin{equation}
\mathbf{E}^u = \text{Transformer}(\text{Proj}(\mathbf{q}^u) + \mathbf{P}),\\
\mathbf{E}^v = \text{Transformer}(\text{Proj}(\mathbf{q}^v) + \mathbf{P}),
\end{equation}
where $\mathbf{P}$ denotes the positional embeddings. The self-attention mechanism aggregates distributional patterns across the input sequence, followed by mean pooling:
$\mathbf{h}^u = \text{MeanPool}(\mathbf{E}^u), \mathbf{h}^v = \text{MeanPool}(\mathbf{E}^v)$.
The final conditioning context $\mathbf{h}_{\text{prior}}$ is obtained by concatenating these representations:
\begin{equation}
\mathbf{h}_{\text{prior}} = [\mathbf{h}^u \,\|\, \mathbf{h}^v] \in \mathbb{R}^{2d}.
\end{equation}

\subsubsection{Manifold Warping via Conditional NFs}
Given the context $\mathbf{h}_{\text{prior}}$, we construct the personalized prior distribution by transforming a simple base distribution $\mathbf{z}^0$ inferred by  $z$ through a sequence of $K$ invertible mappings.
Formally, let $f_k: \mathbb{R}^d \mapsto \mathbb{R}^d$ be a bijective function parameterized by $\psi_k$ and conditioned on $\mathbf{h}_{\text{prior}}$. The transformation chain is defined as:
\begin{equation}
\label{eq:flows}
\mathbf{z}^k = f_k(\mathbf{z}^{k-1}; \mathbf{h}_{\text{prior}}), \quad k = 1, \dots, K,
\end{equation}
where $\mathbf{z}^K $ represents the final latent variable sampled from the complex prior.
In this work, we instantiate $f_k$ using Conditional Planar Flows, which offer an efficient mechanism for non-linear expansion and contraction of the density:
\begin{equation}
\mathbf{z}^k = \mathbf{z}^{k-1} + \mathbf{u}_k(\mathbf{h}_{\text{prior}}) \cdot \tanh \left( \mathbf{w}_k(\mathbf{h}_{\text{prior}})^\top \mathbf{z}^{k-1} + b_k(\mathbf{h}_{\text{prior}}) \right),
\end{equation}
where the parameters $\mathbf{u}_k, \mathbf{w}_k, b_k$ are dynamically generated from $\mathbf{h}_{\text{prior}}$ via  MLPs.
According to the change of variables theorem \cite{rezende2015variational}, the log-density of the transformed latent variable $\mathbf{z}$ under the personalized prior $p_\psi(\mathbf{z}^k \mid \mathbf{c})$ can be computed exactly as:
\begin{equation}
\log p_\psi(\mathbf{z}^k \mid \mathbf{c}) = \log p_0(\mathbf{z}^0) - \sum_{k=1}^K \log \det \left|  \frac{\partial f_k(\mathbf{z}^{k-1}; \mathbf{h}_{\text{prior}})}{\partial \mathbf{z}^{k-1}} \right|.
\label{eq:prior_log_prob}
\end{equation}
This formulation allows the model to adaptively warp the isotropic Gaussian $\mathbf{z}^0$ into a complex, user-item pattern-specific manifold. 
Conditioned on the transformed latent variable $\mathbf{z}^K$ and the input features $\mathbf{x}$, the decoder produces the estimation of watch time via
\begin{equation}
    \hat{y} = \mu_\theta(\mathbf{z}^K, \mathbf{x}),
\end{equation}
thereby enabling distribution-aware continuous regression.





\subsection{Joint Optimization Objectives}

To jointly improve prediction accuracy and distributional fidelity, we design a multi-objective loss. 

\paragraph{Accuracy Anchoring}

In the observation space, we adopt the Huber loss $\mathcal{L}_{\text{Huber}}$$\mathcal{L}_{\text{Huber}}$~\cite{huber1992robust},which serves as an accuracy anchor that directly constrains point estimates and improves regression fidelity:
\begin{equation}
\mathcal{L}_{\text{base}}
= \mathcal{L}_{\text{Huber}}(y, \hat{y}) = \mathcal{L}_{\text{Huber}}(y, \mathbf{\mu}_\theta(\mathbf{z}^K, \mathbf{x})).
\end{equation}

\paragraph{Latent Space Regularization}

In the latent space, we align the approximate posterior $q_\phi(\mathbf{z}^0 \mid y, \mathbf{c})$ with a standard Gaussian distribution via KL divergence:
\begin{equation}
\mathcal{L}_{\text{latent}}
=
\text{KL}\!\left(
q_\phi(\mathbf{z}^0 \mid y, \mathbf{c})
\;\|\;
\mathcal{N}(\mathbf{0}, \mathbf{I})
\right)
\end{equation}
This loss preserves continuity and identifiability in the latent space.

\paragraph{Distribution Alignment}

To align the generated watch-time distribution with historical interaction patterns,
we introduce a Wasserstein distance–based loss on quantile representations. Specifically, we generate the samples $\hat{y}^l = \mu_\theta(\mathbf{z}_l^K, \mathbf{x})$ with latent vectors $\mathbf{z}_l^K$ sampled from the NF-based prior. Using the user and item side quantiles $\mathbf{q}^u$ and $\mathbf{q}^v$ defined above, we align the order statistics of these generated samples via the following joint objective:
\begin{equation}
\label{eq:wloss}
\mathcal{L}_{\text{W-Dist}}
=
\frac{1}{L}
\sum_{l=1}^{L}
\Big(
\lambda
\left|
\hat{y}^{(l)} - q^u_l
\right|
+
(1-\lambda)
\left|
\hat{y}^{(l)} - q^v_l
\right|
\Big),
\end{equation}
where $\lambda$ balances the influence of user and item level distributional alignment.

Finally, the whole training objective is as follows:
\begin{equation}
\mathcal{L}
=  \lambda_1 \mathcal{L}_{\text{base}}
+
\lambda_2 \mathcal{L}_{\text{latent}}
+
\mathcal{L}_{\text{W-Dist}}.
\end{equation}

\subsection{Capturing Multimodality via Generative Modeling}

FlowTime adopts a continuous generative regression paradigm that directly
models conditional distributions in continuous spaces.
Assume that under optimization, $\mu_\theta(\boldsymbol{z})$ aligns with
a finite set of dominant conditional modes
$\{\alpha_k\}_{k=1}^K \subset \mathbb R$ of the true conditional
distribution $p_{\mathrm{data}}(y\mid x)$.
We define the latent region associated with mode $k$ as $\mathcal Z_k
\;=\;
\left\{
\boldsymbol{z}
\;\middle|\;
\|\mu_\theta(\boldsymbol{z})-\alpha_k\|\le \varepsilon
\right\}$.

\begin{proposition}[Latent Space Partitioning]
\label{prop:latent-partition}
Assume that $\mu_\theta(\cdot)$ is $L$-Lipschitz continuous and let
\(
\Delta \triangleq \min_{i\neq j}\|\alpha_i-\alpha_j\|
\)
denote the minimum separation between distinct conditional modes.
If $\Delta > 2\varepsilon$, then the latent regions $\{\mathcal Z_k\}$
are pairwise disjoint and satisfy
\begin{equation}
\mathcal Z_i \cap \mathcal Z_j = \varnothing,
\qquad
\|\boldsymbol{z}_i-\boldsymbol{z}_j\|
\ge
\frac{\Delta-2\varepsilon}{L},
\quad
\forall \boldsymbol{z}_i\in\mathcal Z_i,\
\boldsymbol{z}_j\in\mathcal Z_j,\
i\neq j .
\end{equation}
\begin{equation}
\boldsymbol{z}\in\mathcal Z_k
\ \Longrightarrow\
\|\mu_\theta(\boldsymbol{z})-\alpha_k\|\le \varepsilon ,
\end{equation}
i.e., each latent region decodes to a distinct high-density conditional mode,
thereby avoiding collapse to low-density averages.
\end{proposition}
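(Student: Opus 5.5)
The plan is a direct triangle-inequality argument in the output space, transferred back to the latent space through the Lipschitz constant of $\mu_\theta$. The statement has three parts: disjointness, the separation bound, and the decoding property. I will prove them in reverse order of difficulty, since the last is essentially definitional.

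First, the implication $\boldsymbol{z}\in\mathcal Z_k \Rightarrow \|\mu_\theta(\boldsymbol{z})-\alpha_k\|\le\varepsilon$ is immediate from the definition of $\mathcal Z_k$. I record it only so it can be reused in the other two steps.

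Second, fix $i\neq j$ and take arbitrary $\boldsymbol{z}_i\in\mathcal Z_i$ and $\boldsymbol{z}_j\in\mathcal Z_j$. By the reverse triangle inequality,
\begin{equation*}
\|\mu_\theta(\boldsymbol{z}_i)-\mu_\theta(\boldsymbol{z}_j)\|
\;\ge\;
\|\alpha_i-\alpha_j\|-\|\mu_\theta(\boldsymbol{z}_i)-\alpha_i\|-\|\mu_\theta(\boldsymbol{z}_j)-\alpha_j\|
\;\ge\;
\Delta-2\varepsilon .
\end{equation*}
Combining this with the Lipschitz bound $\|\mu_\theta(\boldsymbol{z}_i)-\mu_\theta(\boldsymbol{z}_j)\|\le L\|\boldsymbol{z}_i-\boldsymbol{z}_j\|$ gives $\|\boldsymbol{z}_i-\boldsymbol{z}_j\|\ge(\Delta-2\varepsilon)/L$, which is the separation inequality. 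Here I take $L>0$. If $L=0$, then $\mu_\theta$ is constant and at most one region can be nonempty, so the claim is vacuous.

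Third, disjointness follows from the same display. Since $\Delta>2\varepsilon$, the lower bound $\Delta-2\varepsilon$ is strictly positive, so a common point $\boldsymbol{z}\in\mathcal Z_i\cap\mathcal Z_j$ would force $0=\|\mu_\theta(\boldsymbol{z})-\mu_\theta(\boldsymbol{z})\|>0$, a contradiction. Equivalently, $\mathcal Z_i\cap\mathcal Z_j\neq\varnothing$ would make $\|\alpha_i-\alpha_j\|\le 2\varepsilon<\Delta$, contradicting the definition of $\Delta$. This version does not need the Lipschitz assumption.

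None of these steps is a real obstacle. The main care point is interpretive: the closing phrase ``avoiding collapse to low-density averages'' should be read as a consequence of the three facts, not a further claim to prove. It holds because every latent point in $\bigcup_k\mathcal Z_k$ decodes within $\varepsilon$ of some mode $\alpha_k$, whereas the mean-collapse point of Proposition~\ref{prop:mean-collapse} sits between modes, at distance about $\Delta/2>\varepsilon$ from each.
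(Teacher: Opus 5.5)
Your proof is correct and matches the paper's argument almost exactly: the triangle inequality gives $\|\mu_\theta(\boldsymbol{z}_i)-\mu_\theta(\boldsymbol{z}_j)\|\ge\Delta-2\varepsilon$, the Lipschitz bound carries this over to the latent space, and disjointness follows by contradiction because a common point would force $\|\alpha_i-\alpha_j\|\le 2\varepsilon$. Your two additions are also correct and go slightly beyond what the paper states: the $L=0$ edge case is vacuous because at most one region is nonempty, and disjointness does not use the Lipschitz assumption.
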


The formal proof and its probabilistic extension under NF-based sampling
are provided in Appendix~\ref{app:latenttheory}.
\begin{table}[t]
  \centering
  \caption{Statistics of the datasets used in our TimeRec Library.}
  \vspace{-1em}
  \label{tab:datasets}
  \resizebox{\columnwidth}{!}{%
  \begin{tabular}{lcccc}
    \toprule
    \textbf{Dataset} & \textbf{\# Users} & \textbf{\# Items} & \textbf{\# Impressions} & \textbf{Characteristics} \\
    \midrule
    KuaiRand-Pure & 27,285 & 7,583 & 1,186,059 & Unbiased \\
    KuaiRec       & 7,176  & 10,728 & 12,530,806 & High Density \\
    \bottomrule
  \end{tabular}%
  }
  \vspace{-1em}
\end{table}

\section{Experiments}
\label{sec:exp}
This section presents extensive experiments to demonstrate the effectiveness of FlowTime.
Five research questions are explored:

\begin{itemize}[itemindent=0pt, left=4pt]
    \item \textbf{RQ1:} Does FlowTime outperform state-of-the-art baselines in both offline accuracy and online commercial metrics? 
    \item \textbf{RQ2:} How is FlowTime compared to other paradigms in terms of computational efficiency versus predictive accuracy? 
    \item \textbf{RQ3:} How effective is each major module in FlowTime?
    \item \textbf{RQ4:} Does FlowTime mitigate the ``mean-collapse'' issue and capture fine-grained user-item interaction patterns?
    \item \textbf{RQ5:} How sensitive is FlowTime to key hyperparameters?  
\end{itemize}

\begin{table*}[t]
\centering
\caption{Performance comparison among different approaches on KuaiRec, KuaiRand, and the Indust dataset. For comparative clarity, FlowTime is grouped under ``Generative Regression" here.}
\label{tab:watch_time}
\resizebox{1.\textwidth}{!}{%
\begin{tabular}{c|c|cccc|cccc|cc}
\toprule
\multirow{2}{*}{\textbf{Paradigm}} & \multirow{2}{*}{\textbf{Method}}
& \multicolumn{4}{c|}{\textbf{KuaiRec}}
& \multicolumn{4}{c|}{\textbf{KuaiRand}}
& \multicolumn{2}{c}{\textbf{Indust}} \\
\cmidrule(lr){3-6}\cmidrule(lr){7-10}\cmidrule(lr){11-12}
& 
& MAE~$\downarrow$ & XAUC~$\uparrow$ & PDF-U~$\downarrow$ & PDF-I~$\downarrow$
& MAE~$\downarrow$ & XAUC~$\uparrow$ & PDF-U~$\downarrow$ & PDF-I~$\downarrow$
& MAE~$\downarrow$ & XAUC~$\uparrow$ \\
\midrule

\multirow{5}{*}{\makecell{Direct\\Regression}}
& VR (\textit{value regression}) & 3.3973 & 0.5822 & 0.3073 & 0.1232 & 22.1504 & 0.6429 & 0.7386 & 0.7363 & 21.3199 & 0.6041 \\
& D2Q~\cite{d2q} (\textit{KDD'22}) & 3.2696 & 0.6043 & 0.3210 & 0.1798 & 19.4258 & \underline{0.6715} & 0.6697 & 0.5665 & 19.6098 & 0.6156 \\
& D2Co~\cite{d2co} (\textit{Recsys'23}) & 3.2633 & 0.5895 & 0.3055 & 0.1346 & 20.7854 & 0.6547 & 0.7435 & 0.6333 & 20.3233 & 0.6053 \\
& CWM~\cite{cwm} (\textit{KDD'24}) & 3.3532 & 0.5899 & 0.3010 & 0.1521 & 19.6351 & 0.6668 & \underline{0.6289} & 0.5908 & 19.9865 & 0.6085 \\
& EGMN~\cite{egmn} (\textit{Recsys'25}) & \underline{3.1803} & \underline{0.6125} & 0.2646 & 0.0874 & 19.3246 & 0.6682 & 0.6503 & \underline{0.5414} & 18.2354 & 0.6155 \\
& DIFL~\cite{jin2026invariant} (\textit{AAAI'26}) & 3.2420 & 0.6055 & 0.3103 & 0.1544 & 19.3181 & 0.6708 & 0.6529 & 0.5374 & 17.5606 & 0.6188 \\
\midrule

\multirow{4}{*}{\makecell{Ordinal\\Regression}}
& TPM~\cite{tpm} (\textit{KDD'23}) & 3.4584 & 0.5819 & 0.2844 & 0.1238 & 22.5950 & 0.6303 & 0.7567 & 0.7770 & 19.2344 & 0.6055 \\
& CREAD~\cite{cread} (\textit{AAAI'24}) & 3.2290 & 0.6123 & 0.2741 & 0.1053 & 19.8087 & 0.6678 & 0.7028 & 0.6479 & 17.9823 & 0.6134 \\
& PTPM~\cite{PTPM} (\textit{CIKM'25}) & 3.2865 & 0.6033 & 0.2787 & 0.1169 & 20.6584 & 0.6679 & 0.7345 & 0.6993 & 18.4263 & 0.6111 \\
& SWaT~\cite{swat} (\textit{KDD'25}) & 3.3496 & 0.5888 & 0.3308 & 0.1168 & 22.3353 & 0.6515 & 0.7456 & 0.7303 & 20.4522 & 0.6098 \\
\midrule

\multirow{2}{*}{\cellcolor{white}\makecell{Generative\\Regression}}
& GR~\cite{ma2024generative} (\textit{WWW'26}) & 3.1985 & 0.6124 & \underline{0.2644} & \underline{0.0833} & \underline{19.2742} & 0.6682 & 0.6664 & 0.6570 & \underline{17.2503} & \underline{0.6201} \\
& \textbf{FlowTime (Ours)}
& \textbf{3.1588} & \textbf{0.6174} & \textbf{0.2634} & \textbf{0.0823} & \textbf{19.1045} & \textbf{0.6751}
& \textbf{0.5974} & \textbf{0.4861} & \textbf{16.9553}
& \textbf{0.6243} \\
\bottomrule
\end{tabular}%
}
\begin{tablenotes}
\footnotesize
\item[*] The best and second best results are marked in \textbf{bold} and \underline{underline}, respectively.
$\uparrow$ indicates that higher values are better, while $\downarrow$ indicates the opposite.
Each experiment is repeated 5 times and the average is reported.
\end{tablenotes}
\end{table*}

\subsection{The TimeRec Library}
To address the prevalence of inconsistent evaluation and foster reproducible research, we develop TimeRec, the first open-source unified benchmarking library for the watch time prediction domain. TimeRec standardizes the entire experimental pipeline by integrating diverse datasets, rigorous feature engineering, and reproduced SOTA methods to ensure fair and consistent comparisons.

\subsubsection{Datasets}
We evaluate all methods on two public datasets and one industrial dataset, with statistics summarized in Tab.~\ref{tab:datasets}.
\begin{enumerate}[itemindent=0pt, left=4pt]
    \item \textbf{KuaiRand-Pure}~\cite{gao2022kuairand}. Derived from Kuaishou's random dispatching policy, this dataset provides unbiased interaction data free from system-induced selection biases, making it ideal for validating intrinsic user interest modeling. 
    \item \textbf{KuaiRec}~\cite{gao2022kuairec}: 
    A fully observed dataset characterized by high density. The rich user-item interactions facilitate precise evaluation of fine-grained patterns and long-term user habits.
    \item \textbf{Indust}: 
    A large-scale industrial dataset sourced from a short-video platform with over 400 million DAUs and multi-billion impressions each day. It exhibits extreme sparsity and long-tail distributions, reflecting real-world deployment challenges.
\end{enumerate}

\subsubsection{Feature Engineering}
\paragraph{Data Splitting} We implement a rigorous time-based splitting strategy for Indust, which utilizes interaction logs from 5 consecutive days for training and the subsequent day for testing. KuaiRand and KuaiRec strictly adhere to their standard chronological train/test splits provided by the original work~\cite{gao2022kuairand,gao2022kuairec}.
\paragraph{Feature Sets} Following~\cite{2024arXiv240215164Y}, we align feature configurations for public datasets. Both utilize `user\_id' and `onehot\_feat0\textasciitilde17' as user features. Item features comprise `item\_id', `duration', and dataset-specific attributes: `feat0\textasciitilde4' for KuaiRec and `feat0\textasciitilde3' for KuaiRand.

\subsubsection{Baselines}
We reproduce representative SOTA methods for rigorous evaluation, categorized into three established paradigms:
\begin{itemize}[leftmargin=*]
\item \textbf{Direct Regression~\cite{d2co,d2q,cwm,egmn}:} 
MSE-based models, including variants augmented with causal debiasing or multi-Gaussian fitting.
\item \textbf{Ordinal Regression~\cite{tpm,cread,PTPM,swat}}
Methods that discretize continuous time into intervals for classification.
\item \textbf{Discrete Generative Regression~\cite{ma2024generative}:} 
Approaches utilizing language modeling to generate quantized time tokens.
\end{itemize}
Differently, our FlowTime represents a novel (or the fourth) paradigm --- \textbf{Continuous Generative Regression}. 


\begin{table}[t]
\centering
\caption{Performance gain on online A/B testing.}
\label{tab:ab_test}
\vspace{-1em}
\resizebox{0.36\textwidth}{!}{ 
\begin{tabular}{l|c|c|c}
\toprule
\textbf{Metric} & \textbf{Uplift} & \textbf{P-value} & \textbf{95\% CI} \\
\midrule
APP Usage Time & +1.027\% & $0.003$ & $[0.82\%, 1.23\%]$ \\
Video Play Time & +1.044\% & $0.001$ & $[0.91\%, 1.18\%]$ \\
Long-view Rate & +0.892\% & $0.002$ & $[0.65\%, 1.13\%]$ \\
Share Rate & +1.236\% & $0.005$ & $[0.95\%, 1.52\%]$ \\
\bottomrule
\end{tabular}
}
\vspace{-0.5em}
\end{table}

\subsubsection{Evaluation Protocols}
We employ two standard metrics and a novel personalization fidelity metric to comprehensively evaluate predictive performance:
\begin{itemize}[leftmargin=*]
    \item \textbf{Mean Average Error (MAE)}: Quantifies point-wise regression accuracy between predicted values $\{\hat{y_i}\}_{i=1}^N$ and actual values $\{y_i\}_{i=1}^N$, denoted as $\frac{1}{N} \sum_{i=1}^{N} \left| \hat{y}_i - y_i \right|$.
    \item \textbf{XAUC~\cite{d2q}}: For all valid pairs $\mathcal{P} = \{(i,j) \mid y_i > y_j\}$, XAUC is defined as the proportion of correctly ordered predictions, denoted as $\frac{1}{|\mathcal{P}|} \sum_{(i,j) \in \mathcal{P}} \mathbb{I}(\hat{y}_i > \hat{y}_j)$, where $\mathbb{I}(\cdot)$ is the indicator function. A higher XAUC indicates superior ranking capability.
    \item \textbf{Personalized Distributional Fidelity (PDF):}
    To evaluate personalized pattern capture, we introduce PDF, defined as the average Jensen-Shannon Divergence between the histogram-binned probability distributions of ground-truth $P_u$ and predicted $Q_u$ watch time sequences for each user:
    \begin{equation}
    \text{PDF-U} = \mathbb{E}_{u} \left[ \frac{1}{2} \text{KL}(P_u \parallel M_u) + \frac{1}{2} \text{KL}(Q_u \parallel M_u) \right]
    \end{equation}
    where $M_u = \frac{1}{2}(P_u+Q_u)$. We report both User-level (PDF-U) and Item-level (PDF-I) scores.
\end{itemize}

\subsubsection{Implementation Details}
All models are trained for 20 epochs using the Adam optimizer ($\beta_1 = 0.9$, $\beta_2 = 0.999$) with a batch size of 1024 and learning rate of $5e^{-4}$. 
To ensure fair comparison, we adopt the optimal hyperparameters reported in the original papers for all baselines. Furthermore, for shared modules across different models (e.g. feature encoders), we strictly align the number of parameters to ensure that performance gains are derived from method superiority rather than model capacity.

\subsection{Performance Comparison (RQ1)}
\subsubsection{Offline Evaluation}
Table~\ref{tab:watch_time} reports the comparative results across three datasets, where FlowTime consistently establishes a new state-of-the-art across all metrics.
First, compared to Direct Regression baselines, FlowTime effectively mitigates the "mean-collapse" issue inherent in MSE-based optimization. This is evidenced by a substantial reduction in MAE by 0.676\% and an XAUC lift of 0.8\% on the KuaiRec dataset.
Second, against Ordinal and Discrete Generative paradigms, FlowTime achieves superior precision by avoiding quantization errors. This advantage is manifested as a 0.704 MAE reduction on KuaiRand, and simultaneous improvements on Indust (MAE 1.027, XAUC 1.777\%).
Most notably, FlowTime demonstrates superior flexibility and effectiveness over the mixture-density-based EGMN, reducing PDF-U and PDF-I by 0.454\% and 5.835\% on KuaiRec, respectively.
This demonstrates the superior expressivity of our flow-based architecture in modeling complex interaction patterns.

\subsubsection{Online A/B Testing}
To validate real-world efficacy, we deploy FlowTime in the Trending Video Recommendation scenario of a leading platform with over 400 million DAUs. 
Serving 10\% of traffic for six days, FlowTime achieves statistically significant gains across all core metrics, as detailed in Table~\ref{tab:ab_test} with corresponding P-values and 95\% Confidence Intervals (CI).
Specifically, it yields substantial commercial value with a \textbf{1.044\%} uplift in Video Play Time and \textbf{1.027\%} in App Usage Time. 
Crucially, it also significantly boosts interaction metrics (e.g., \textbf{+1.236\%} Share Rate), effectively enhancing recommendation accuracy and user satisfaction.

\begin{figure}[t]
    \centering
    \includegraphics[scale=0.18]{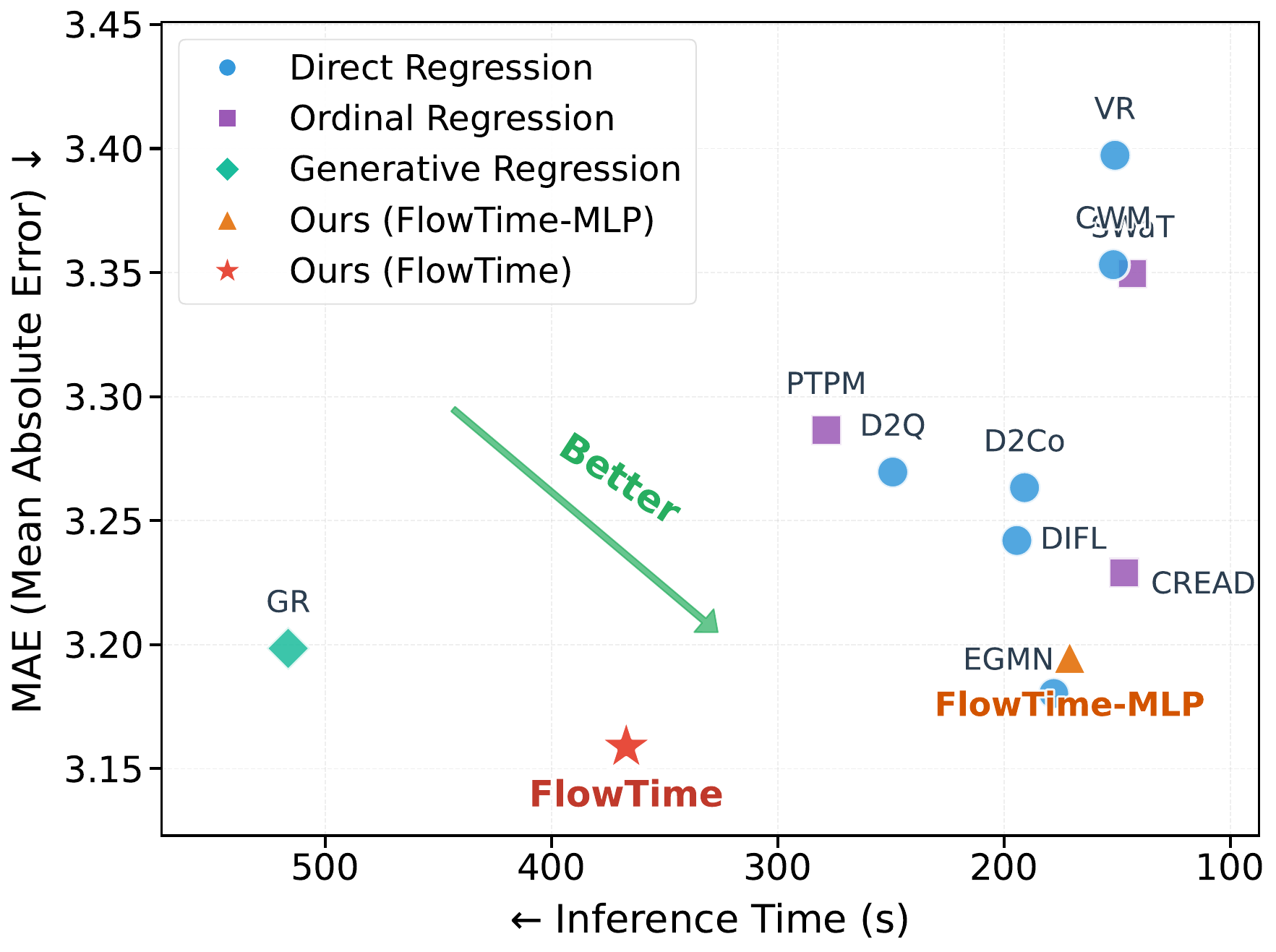}
    \caption{The efficiency-performance trade-off comparison of FlowTime against baselines. The bottom-right corner represents lower error and faster inference.}
    \label{fig:efficiency}
    \vspace{-1em}
\end{figure}

\subsection{Efficiency-Performance Analysis (RQ2)}
\label{sec:efficiency}
Fig.~\ref{fig:efficiency} visualizes the trade-off between inference speed and prediction accuracy across paradigms, where FlowTime achieves a superior balance.
While marginally slower than direct regression models, it offers significantly higher accuracy.
Furthermore, the FlowTime-MLP variant (ablated in Table~\ref{tab:ablation}) further accelerates inference, closing the speed gap.
Crucially, compared to the generative baseline GR, FlowTime realizes a twofold improvement: it enhances accuracy while markedly reducing inference latency by nearly \textbf{30\%}.
This substantiates the dual superiority of our one-step continuous generative architecture and Flow-based Personalized Priors in terms of both efficiency and effectiveness, particularly over iterative autoregressive mechanisms.


\begin{table}[t]
    \centering
    \caption{Ablation study on generative architectures, normalizing flow components, and history modeling strategies.}
    \label{tab:ablation}
    \vspace{-1em}
    \resizebox{0.43\textwidth}{!}{
    \renewcommand{\arraystretch}{0.95}
    \begin{tabular}{l|cccc}
        \toprule
        \textbf{Variant} & \textbf{MAE}  & \textbf{XAUC} & \textbf{PDF-U} & \textbf{PDF-I}\\
        \midrule
        (a) \textbf{FlowTime (Ours)} & \textbf{3.159} & \textbf{0.617} & \textbf{0.249} & \textbf{0.078} \\
        \midrule
        \multicolumn{5}{l}{\textit{Generative Architecture}} \\
        (b) \ w/ DDPM~\cite{ddpm} & 3.163 & 0.613 & 0.251 & 0.081 \\
        (c) \ w/ Flow Matching~\cite{flowmatching} & 3.161 & 0.614 & 0.250 & 0.080 \\
        \midrule
        \multicolumn{5}{l}{\textit{Distribution Modeling}} \\
        (d) \ w/o Normalizing Flows & 3.245 & 0.607 & 0.304 & 0.113 \\
        \midrule
        \multicolumn{5}{l}{\textit{History Modeling}} \\
        (e) \ w/o User History & 3.281 & 0.599 & 0.303 & 0.081 \\
        (f) \ w/o Item History & 3.235 & 0.605 & 0.253 & 0.111 \\
        (g) \ w/o Both \& NF & 3.318 & 0.593 & 0.308 & 0.127 \\
        (h) \ Rep. Trans. w/ MLP & 3.195 & 0.611 & 0.251 & 0.081 \\
        \bottomrule
    \end{tabular}
    }
\end{table}
\subsection{Ablation Study (RQ3)}
We validate the efficacy of FlowTime's core components across three dimensions as shown in Tab.~\ref{tab:ablation}.

\paragraph{Impact of Generative Architecture.}
The mainstream iterative baselines such as Diffusion~\cite{ddpm} and Flow Matching~\cite{flowmatching} outperform SOTA methods, confirming the efficacy of continuous generative modeling.
Nevertheless, FlowTime consistently achieves the best results (Row (a) vs. Rows (b-c)). This superiority suggests that our one-step approach offers a robust inductive bias for high-sparsity scenarios, effectively circumventing the error accumulation inherent in iterative denoising while ensuring low latency.

\paragraph{Necessity of NFs.}
Reverting the Flow-based prior to a standard Gaussian (VAE) degrades MAE by 2.72\% and XAUC by 1.62\% (Row (a) vs. Row (d)). This sharp drop verifies that static priors fail to capture the multimodality of watch time, underscoring the criticality of NFs in modeling complex preference distributions.

\paragraph{Effectiveness of History Modeling.}
(1) Removing both user and item histories (Row (g)) yields the worst performance, confirming the pivotal role of historical interaction patterns in personalization.
(2) User history proves more critical than item history, indicating user habits are the primary drivers (Row (e) vs. Row (f)). (3) Replacing the Transformer with an MLP  (Row (h)) degrades XAUC by $\sim$1\%, highlighting the necessity of attention mechanisms for capturing global behavioral patterns.

\subsection{The Analysis of Personalized Patterns (RQ4)}
\begin{figure}
    \centering
    \includegraphics[scale=0.33]{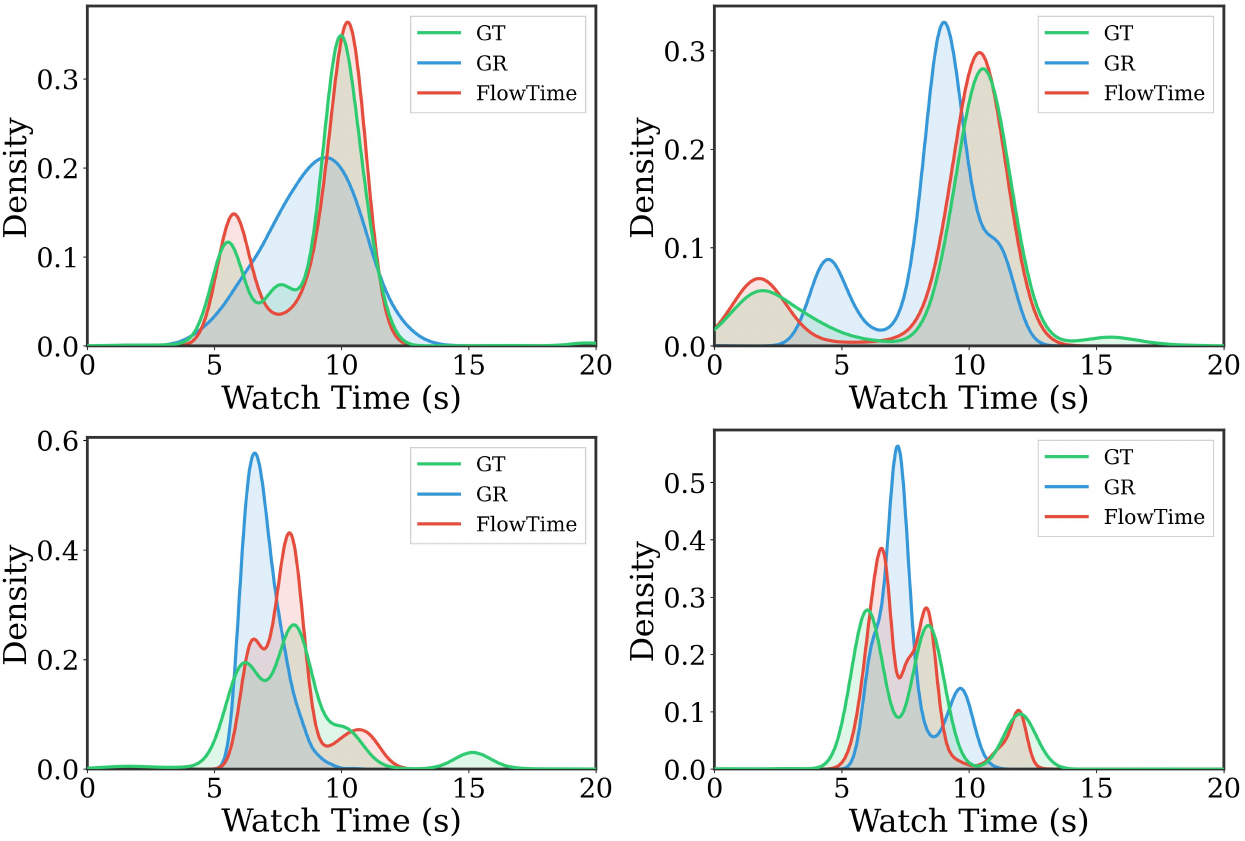}
    \caption{Visualization of distributional fidelity.}
    \label{fig:pred_distribution}
    \vspace{-1em}
\end{figure}
\subsubsection{Prediction Distributions Analysis}
We visualize the predicted watch time distributions for representative user-item pairs to assess the model's capability in capturing complex interaction patterns.
> As shown in Fig.~\ref{fig:pred_distribution}, compared to GR, FlowTime exhibits superior distributional fidelity, accurately reconstructing the Ground Truth (GT) distributions across diverse shapes, including bimodal and trimodal.
This confirms that by modeling the latent manifold via NFs, our framework effectively captures personalized user-item interaction patterns.


\subsection{Effect of Hyperparameters (RQ5)}
\begin{figure}
    \hspace{-15pt}
    \centering
    \includegraphics[scale=0.24]{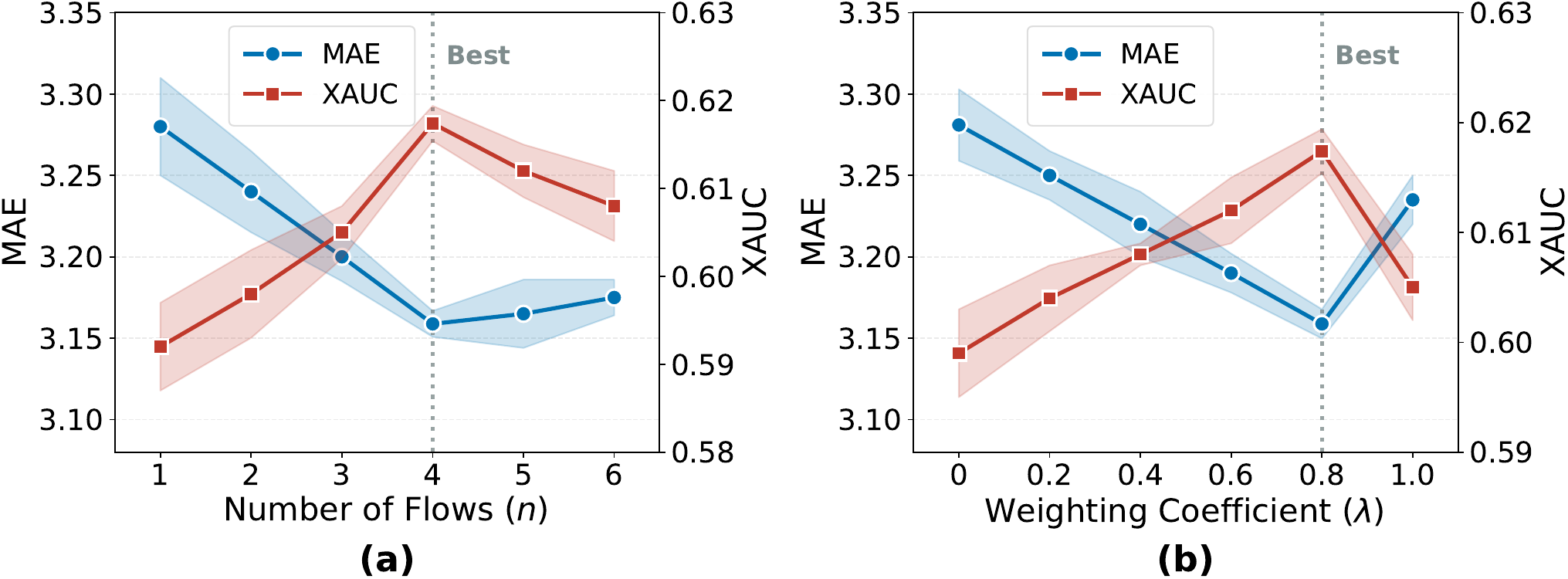}
    \caption{Ablation study on key hyperparameters. (a) Impact of the number of flow steps ($n$) on model performance. (b) Impact of the user-item weighting coefficient ($\lambda$).}
    \label{fig:hyper}
    \vspace{-1em}
\end{figure}
We investigate the impact of two critical hyperparameters: the number of flow steps $K$ (in Eq.~\ref{eq:flows}) and the user-item weighting coefficient $\lambda$ (in Eq.~\ref{eq:wloss}). Fig.~\ref{fig:hyper} illustrates the results on the KuaiRec dataset.
$K$ controls the complexity of the bijective transformation. As shown Fig.~\ref{fig:hyper}(a), performance improves consistently as $K$ increases, peaking at $K=4$, before exhibiting a slight decline.
This suggests that while sufficient flow depth is necessary to model complex manifold topologies, an excessive number of steps may induce overfitting or optimization difficulties.

This coefficient $\lambda$ modulates the trade-off between user and item patterns in constructing the prior latent space.
As $\lambda$ increases, performance steadily improves, corroborating our finding that user-specific patterns are the primary determinants of watch time.
However, performance degrades beyond $\lambda=0.8$.
This inflection point indicates that while user patterns are dominant, item-specific patterns remain indispensable for precise WTP, and neglecting them undermines model fidelity.

\section{Conclusion}
This paper addresses the intrinsic heterogeneity of watch time distributions by proposing \textbf{FlowTime}, a novel framework under the \textit{Continuous Generative Regression} paradigm.
Through the causal modeling of interaction patterns and the design of a Flow-based Personalized Prior, FlowTime effectively captures multimodal user-item interaction patterns while ensuring real-time inference.
Our work not only sets a new SOTA but also bridges the gap in standardized evaluation by constructing the \textit{TimeRec} library, paving the way for more fair and consistent benchmarking in future research.

\noindent \textbf{Acknowledgments}. This work was partially supported by Kuaishou Technology. Jihong Guan was supported by the National Social Science Fund of China (NSFC) under grant No. 24\&ZD185. The computations in this research were performed using the CFFF platform of Fudan University.

\bibliographystyle{ACM-Reference-Format}
\bibliography{main}

\appendix
\section{Theoretical Proofs}
\subsection{Limitations in Ordinal Regression Methods: A Case Study with CREAD}
\label{app:creadtheory}
This section provides a theoretical analysis of these limitations, demonstrating the importance of capturing temporal dependencies to improve prediction accuracy. 
We quantify this error using the Kullback-Leibler (KL) divergence between the true sequential probability distribution and the naive approximation that assumes conditional independence.

Let $\{(x_i, y_i)\}_{i=1}^{N}$ be the training set, where $y_i \in  \mathbb{R}$ is the ground-truth watch time, and $x_i \in \mathbb{R}^d$ denotes the $i$-th input feature vector integrating user- and item-side features.
Discretized ordinal regression methods operate by discretizing the continuous range of $y$ using $M$ thresholds, $c_1 < c_2 < \dots < c_M$, which define a set of intervals.

Instead of modeling the probability of $y_i$ falling into a specific interval, these methods model a sequence of binary decisions. We define a sequence of Bernoulli random variables $\mathbf{B}_i = (\mathbf{B}_i^1, \dots, \mathbf{B}_i^{M})$ for each data point $i$, where:
\begin{equation}
\mathbf{B}_i^m = 1(y_i > c_m)
\end{equation}
Here, $1(\cdot)$ is the indicator function, and $\mathbf{B}_i^m=1$ represents the event that the true value $y_i$ exceeds the threshold $c_m$. The prediction $\hat{y}_i$ is then constructed as a weighted sum of the probabilities of these binary events.

The naive CREAD model implicitly assumes conditional independence across discretized intervals:
\begin{equation}
P_{\text{naive}}(\mathbf{B}_i \mid x_i) = \prod_{m=1}^{M} P(\mathbf{B}_i^m \mid x_i).
\end{equation}

However, these decisions are inherently dependent. The knowledge that $y_i$ has surpassed a threshold $c_{m-1}$ (i.e., $\mathbf{B}_i^{m-1}=1$) provides significant information about whether it will also surpass the next threshold $c_m$. The true conditional distribution factorizes sequentially:
\begin{equation}
P_{\text{true}}(\mathbf{B}_i \mid x_i) = \prod_{m=1}^{M} P(\mathbf{B}_i^m \mid \mathbf{B}_i^{<m}, x_i) ,
\end{equation}
where $\mathbf{B}_i^{<m} = (\mathbf{B}_i^1, \dots, \mathbf{B}_i^{m-1})$ represents the history of binary decisions up to step $m$.

The information loss or error introduced by the naive model's independence assumption can be precisely quantified by the KL divergence between the true and naive distributions:
\begin{equation}
\small
\begin{aligned}
\begin{aligned} D_{KL}(P_{\text{true}} \| P_{\text{naive}}) &= \sum_{\mathbf{B}_i} P_{\text{true}}(\mathbf{B}_i \mid x_i) \log \frac{P_{\text{true}}(\mathbf{B}_i \mid x_i)}{P_{\text{naive}}(\mathbf{B}_i \mid x_i)} \\
&= \sum_{\mathbf{B}_i} P_{\text{true}}(\mathbf{B}_i \mid x_i) \log \frac{\prod_{m=1}^{M} P(\mathbf{B}_i^m \mid \mathbf{B}_i^{<m}, x_i)}{\prod_{m=1}^{M} P(\mathbf{B}_i^m \mid x_i)} \\ &= \sum_{\mathbf{B}_i} P_{\text{true}}(\mathbf{B}_i \mid x_i) \sum_{m=1}^{M} \log \frac{P(\mathbf{B}_i^m \mid \mathbf{B}_i^{<m}, x_i)}{P(\mathbf{B}_i^m \mid x_i)} \\ \end{aligned}
\end{aligned}
\end{equation}

Rearranging the summation terms, we derive the total KL divergence that quantifies the error introduced by ignoring dependencies among discretized intervals:
\begin{equation}
\small
\begin{aligned} D_{KL}(P_{\text{true}} \| P_{\text{naive}}) &= \sum_{m=1}^{M} \sum_{\mathbf{B}_i} P_{\text{true}}(\mathbf{B}_i \mid x_i) \log \frac{P(\mathbf{B}_i^m \mid \mathbf{B}_i^{<m}, x_i)}{P(\mathbf{B}_i^m \mid x_i)} \\ &= \sum_{m=1}^{M} \mathbb{E}_{\mathbf{B}_i \sim P_{\text{true}}} \left[ \log \frac{P(\mathbf{B}_i^m \mid \mathbf{B}_i^{<m}, x_i)}{P(\mathbf{B}_i^m \mid x_i)} \right]  \\
 &= \sum_{m=1}^{M} \mathbb{E}_{\mathbf{B}_i^{<m}}  \left[ D_{KL}^{(m)} \right] \end{aligned}
\end{equation}

This can be explicitly expressed as the conditional KL divergence for each bucket:
\begin{equation}
\begin{aligned}
D_{KL}^{(m)} &= D_{KL}\left(P(\mathbf{B}_i^m \mid x_i, \mathbf{B}_i^{<m})\,\|\,P(\mathbf{B}_i^m \mid x_i)\right) \\
& \sum_{b^m \in \{0,1\}}P(\mathbf{B}_i^m=b^m \mid x_i, \mathbf{B}_i^{<m})\log \frac{P(\mathbf{B}_i^m=b^m \mid x_i, \mathbf{B}_i^{<m})}{P(\mathbf{B}_i^m=b^m \mid x_i)}    
\end{aligned}
\end{equation}
This expectation can be approximated as:
\begin{equation}
    \mathbb{E}_{\mathbf{B}_i^{<m}}\left[D_{K L}^{(m)}\right] \approx I(\mathbf{B}_i^m; \mathbf{B}_i^{<m} \mid x_{i})
\end{equation}
where $I(\mathbf{B}_i^m; \mathbf{B}_i^{<m} \mid x_i)$ denotes the conditional mutual information between the current decision $\mathbf{B}_i^m$ and all preceding decisions $\mathbf{B}_i^{<m}$, given the features $x_i$. The detailed proof is in the following:

\begin{equation}
\small
\begin{aligned} I(\mathbf{B}_i^m; \mathbf{B}_i^{<m} \mid x_i) &\triangleq \sum_{\mathbf{b}^{<m}, b^m} P(\mathbf{b}^{<m}, b^m \mid x_i) \log \frac{P(b^m \mid \mathbf{b}^{<m}, x_i)}{P(b^m \mid x_i)} \\ &= \sum_{\mathbf{b}^{<m}} P(\mathbf{b}^{<m} \mid x_i) \sum_{b^m} P(b^m \mid \mathbf{b}^{<m}, x_i) \log \frac{P(b^m \mid \mathbf{b}^{<m}, x_i)}{P(b^m \mid x_i)} \\ &= \sum_{\mathbf{b}^{<m}} P(\mathbf{b}^{<m} \mid x_i) \left( D_{KL}\left(P(\mathbf{B}_i^m \mid x_i, \mathbf{B}_i^{<m})\,\|\,P(\mathbf{B}_i^m \mid x_i)\right) \right) \\ &\triangleq \mathbb{E}_{\mathbf{B}_i^{<m} \sim P(\cdot \mid x_i)} \left[ D_{KL}^{(m)} \right] \end{aligned}.
\end{equation}

The derived KL divergence decomposition illustrates that the error introduced by the naive discretized modeling approach which ignores dependencies across intervals, can be quantified precisely as the cumulative sum of conditional mutual information across all discretized intervals. Specifically, if intervals are entirely independent (i.e., mutual information $I=0$), the resulting KL divergence error is zero; conversely, if strong dependencies exist among intervals ($I>0$), the error increases proportionally to the strength of these dependencies.

\subsection{Mean Collapse in Regression}
\label{app:regtheory}

\begin{proof}
\textbf{Step 1: Optimal Solution Derivation.} 
The objective is $\min_f \mathbb{E}_{x,y} [\|y - f(x)\|^2]$. Setting the functional derivative to zero:
\begin{equation}
    \frac{\delta}{\delta f} \int \int (y - f(x))^2 p_{\text{data}}(y|x) p(x) dy dx = 0 \implies f^*(x) = \mathbb{E}_{p_{\text{data}}}[y|x].
\end{equation}
Under a Gaussian Mixture Model (GMM) representation of $p_{\text{data}}$ with modes $\mu_k$:
\begin{equation}
    f^*(x) = \sum_{k=1}^K \pi_k(x) \int y \mathcal{N}(y; \mu_k, \sigma^2) dy = \sum_{k=1}^K \pi_k(x) \mu_k(x).
\end{equation}

\textbf{Step 2: Density Analysis in Bimodal Case.} 
Consider $K=2$ with $\pi_1=\pi_2=0.5$. The prediction is the centroid $\bar{\mu} = \frac{\mu_1 + \mu_2}{2}$. The true density at $\bar{\mu}$ is:
\begin{align}
    p_{\text{data}}(\bar{\mu}|x) &= 0.5 \mathcal{N}(\bar{\mu}; \mu_1, \sigma^2) + 0.5 \mathcal{N}(\bar{\mu}; \mu_2, \sigma^2) \\
    &= \frac{1}{\sqrt{2\pi}\sigma} \exp\left(-\frac{(\mu_1 - \mu_2)^2}{8\sigma^2}\right).
\end{align}
Let $\Delta = |\mu_1 - \mu_2|$. Under the disjoint modes assumption ($\Delta \gg \sigma$), the term $\frac{\Delta^2}{8\sigma^2} \to \infty$, thus:
\begin{equation}
    \lim_{\Delta/\sigma \to \infty} p_{\text{data}}(f^*(x)|x) = 0.
\end{equation}
The regression model predicts a value with vanishingly small likelihood, effectively collapsing onto an invalid mean.
\end{proof}

\subsection{Latent Space Partitioning and NF-Driven Generation}
\label{app:latenttheory}

We study a continuous generative regression model with latent variable
$\boldsymbol z\in\mathbb R^d$ and conditional decoder mean
$\mu_\theta(\boldsymbol z,x)$, where $x$ denotes observed conditioning
features and $\{\alpha_k\}_{k=1}^K$ denotes the dominant modes of the
true conditional distribution $p_{\mathrm{data}}(y\mid x)$. For each $k\in\{1,\dots,K\}$, the decoder-induced region are assumed to be pairwise disjoin, defined as:
\begin{equation}
\mathcal Z_k
\triangleq
\left\{
\boldsymbol z:\ \|\mu_\theta(\boldsymbol z,x)-\alpha_k\|\le \varepsilon
\right\}.
\label{eq:def_zk}
\end{equation}

\begin{proposition}[Decoder-Induced Latent Partition]
\label{prop:latent_partition}
Assume $\mu_\theta(\cdot,x)$ is $L$-Lipschitz with respect to $\boldsymbol z$:
\begin{equation}
\|\mu_\theta(\boldsymbol z_1,x)-\mu_\theta(\boldsymbol z_2,x)\|
\le
L\|\boldsymbol z_1-\boldsymbol z_2\|,
\quad \forall \boldsymbol z_1,\boldsymbol z_2.
\label{eq:lipschitz}
\end{equation}

Let $\Delta\triangleq \min_{i\neq j}\|\alpha_i-\alpha_j\|$.
If $\Delta>2\varepsilon$, then $\mathcal Z_i\cap \mathcal Z_j=\varnothing$
for all $i\neq j$. Moreover, for any $\boldsymbol z_i\in\mathcal Z_i$ and
$\boldsymbol z_j\in\mathcal Z_j$ with $i\neq j$,
\begin{equation}
\|\boldsymbol z_i-\boldsymbol z_j\|
\ge
\frac{\Delta-2\varepsilon}{L}.
\label{eq:latent_sep}
\end{equation}
\end{proposition}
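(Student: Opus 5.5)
The whole argument rests on the triangle inequality in the output space, combined with the Lipschitz bound \eqref{eq:lipschitz} to pull separation back to the latent space. Throughout, the conditioning features $x$ are fixed, so $\mu_\theta(\cdot,x)$ is simply an $L$-Lipschitz map from $\mathbb R^d$ to $\mathbb R$, and each $\mathcal Z_k$ is the preimage of the closed ball of radius $\varepsilon$ around $\alpha_k$.

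\textbf{Disjointness.} Suppose, for contradiction, that some $\boldsymbol z\in\mathcal Z_i\cap\mathcal Z_j$ with $i\neq j$. By the definition \eqref{eq:def_zk},
\[
\|\alpha_i-\alpha_j\|\le\|\alpha_i-\mu_\theta(\boldsymbol z,x)\|+\|\mu_\theta(\boldsymbol z,x)-\alpha_j\|\le 2\varepsilon .
\]
This gives $\Delta\le 2\varepsilon$, contradicting the hypothesis $\Delta>2\varepsilon$. So the regions are pairwise disjoint. This step does not use the Lipschitz assumption.

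\textbf{Separation.} Take $\boldsymbol z_i\in\mathcal Z_i$ and $\boldsymbol z_j\in\mathcal Z_j$ with $i\neq j$, and write $\mu_\theta(\boldsymbol z_k,x)=\alpha_k+e_k$ with $\|e_k\|\le\varepsilon$ for $k\in\{i,j\}$. The reverse triangle inequality gives
\[
\|\mu_\theta(\boldsymbol z_i,x)-\mu_\theta(\boldsymbol z_j,x)\|\ge\|\alpha_i-\alpha_j\|-\|e_i\|-\|e_j\|\ge\Delta-2\varepsilon>0 .
\]
The left-hand side is at most $L\|\boldsymbol z_i-\boldsymbol z_j\|$ by \eqref{eq:lipschitz}. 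Dividing by $L$ yields \eqref{eq:latent_sep}.

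Two small edge cases need a remark rather than a separate argument. First, we need $L>0$. If $L=0$, then $\mu_\theta$ is constant, so at most one $\mathcal Z_k$ can be nonempty and the claim is vacuous; I would state this explicitly. Second, if some $\mathcal Z_k$ is empty, both statements hold vacuously.

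\textbf{Remaining implication.} The second display in Proposition~\ref{prop:latent-partition}, namely $\boldsymbol z\in\mathcal Z_k\Rightarrow\|\mu_\theta(\boldsymbol z)-\alpha_k\|\le\varepsilon$, is immediate from the definition of $\mathcal Z_k$.

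\textbf{Main obstacle.} The proof itself has no real difficulty. The only delicate point is interpretive: the interpretation that each region ``decodes to a high-density mode'' relies on the modelling assumption that the $\alpha_k$ are the true modes. That assumption is not part of what is proved.
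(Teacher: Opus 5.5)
Your proof is correct and matches the paper's argument essentially step for step. The paper likewise gets $\|\mu_\theta(z_i,x)-\mu_\theta(z_j,x)\|\ge\Delta-2\varepsilon$ from the triangle inequality, divides by $L$ using the Lipschitz bound, and proves disjointness by the same contradiction; your remarks on $L>0$ and on empty regions are harmless refinements that the paper leaves implicit.
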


\begin{proof}
Fix any $i\neq j$ and take arbitrary
$\boldsymbol z_i\in\mathcal Z_i$ and $\boldsymbol z_j\in\mathcal Z_j$.
By the triangle inequality,
\begin{equation}
\begin{split}
\|\mu_\theta(\boldsymbol z_i,x)-\mu_\theta(\boldsymbol z_j,x)\|
&\ge
\|\alpha_i-\alpha_j\|
-\|\mu_\theta(\boldsymbol z_i,x)-\alpha_i\|
-\|\mu_\theta(\boldsymbol z_j,x)-\alpha_j\| \\
&\ge
\|\alpha_i-\alpha_j\| - 2\varepsilon
\ge
\Delta-2\varepsilon.
\label{eq:decoder_sep}
\end{split}
\end{equation}
By the Lipschitz condition~\eqref{eq:lipschitz}, $\|\boldsymbol z_i-\boldsymbol z_j\|
\ge
\frac{1}{L}
\|\mu_\theta(\boldsymbol z_i,x)-\mu_\theta(\boldsymbol z_j,x)\|
\ge
\frac{\Delta-2\varepsilon}{L}$,
which proves~\eqref{eq:latent_sep}.
If $\Delta>2\varepsilon$ and $\mathcal Z_i\cap \mathcal Z_j\neq\varnothing$,
then there exists $\boldsymbol z\in\mathcal Z_i\cap\mathcal Z_j$,
implying
$\|\mu_\theta(\boldsymbol z,x)-\alpha_i\|\le\varepsilon$ and
$\|\mu_\theta(\boldsymbol z,x)-\alpha_j\|\le\varepsilon$.
Thus $\|\alpha_i-\alpha_j\|\le 2\varepsilon$, contradicting
$\Delta>2\varepsilon$. Hence $\mathcal Z_i\cap \mathcal Z_j=\varnothing$.
\end{proof}

\begin{proposition}[Coverage Implies Mode-Consistent Decoding]
\label{prop:coverage_implies}
Let $p(\boldsymbol z\mid c)$ be any conditional sampling distribution and
define its coverage over decoder-induced regions by
\begin{equation}
\alpha(c)
\triangleq
\mathbb P_{p(\cdot\mid c)}
\!\left(
\boldsymbol z\in\bigcup_{k=1}^K \mathcal Z_k
\right)
=
\sum_{k=1}^K
\mathbb P_{p(\cdot\mid c)}(\boldsymbol z\in\mathcal Z_k).
\label{eq:coverage}
\end{equation}
Define the decoding-success event
 \begin{equation}
 \label{eq:eventE}
E \triangleq
\left\{
\min_{1\le k\le K}
\|\mu_\theta(\boldsymbol z,x)-\alpha_k\|
\le \varepsilon
\right\},
\qquad
\mathbb{P}(E \mid c) \ge \alpha(c).
\end{equation}
\end{proposition}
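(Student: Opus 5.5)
The claim reduces to an event inclusion followed by monotonicity of probability, and Proposition~\ref{prop:latent_partition} is used only to justify the equality inside the definition of $\alpha(c)$.

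\textbf{Step 1 (the union equals the event).} Fix $c$ and the associated features $x$, and regard $E$ as the set of latents $\boldsymbol z$ satisfying the stated condition, so that $E$ is measurable in $\boldsymbol z$. If $\boldsymbol z\in\bigcup_{k=1}^K\mathcal Z_k$, then $\boldsymbol z\in\mathcal Z_{k_0}$ for some $k_0$. By the definition in \eqref{eq:def_zk} this means $\|\mu_\theta(\boldsymbol z,x)-\alpha_{k_0}\|\le\varepsilon$. Hence
\[
\min_{k}\|\mu_\theta(\boldsymbol z,x)-\alpha_k\|\le\|\mu_\theta(\boldsymbol z,x)-\alpha_{k_0}\|\le\varepsilon,
\]
so $\boldsymbol z\in E$. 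Conversely, the minimum over finitely many $k$ is attained at some $k^\ast$, which places $\boldsymbol z$ in $\mathcal Z_{k^\ast}$. So in fact $E=\bigcup_k\mathcal Z_k$, although only the inclusion $\bigcup_k\mathcal Z_k\subseteq E$ is needed.

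\textbf{Step 2 (conclude by monotonicity).} Applying monotonicity of $\mathbb P_{p(\cdot\mid c)}$ to this inclusion gives
\[
\mathbb P(E\mid c)\ge\mathbb P_{p(\cdot\mid c)}\Bigl(\boldsymbol z\in\bigcup_k\mathcal Z_k\Bigr)=\alpha(c).
\]
Because the two sets coincide, this holds with equality, which is stronger than the stated bound.

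\textbf{Step 3 (the equality in \eqref{eq:coverage}).} The definition of $\alpha(c)$ also asserts that the union's probability equals the sum $\sum_k\mathbb P(\boldsymbol z\in\mathcal Z_k)$. I would justify this with Proposition~\ref{prop:latent_partition}: under $\Delta>2\varepsilon$ the regions $\mathcal Z_k$ are pairwise disjoint, so finite additivity applies. Measurability of each $\mathcal Z_k$ follows because it is the preimage of a closed ball under the continuous (Lipschitz) map $\mu_\theta(\cdot,x)$.

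\textbf{Main obstacle.} There is no real technical difficulty. The points that need care are bookkeeping ones: the hypothesis $\Delta>2\varepsilon$ must be carried over from Proposition~\ref{prop:latent_partition} for the sum representation to be valid, and $E$ must be read as an event in $\boldsymbol z\sim p(\cdot\mid c)$ with $x$ fixed.

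If the intended probabilistic extension is to NF-generated samples, one further remark applies. When the sampling distribution is the flow-based prior $p_\psi$, pushing forward the base Gaussian through the bijections $f_k$ gives
\[
\alpha(c)=\mathbb P_{\boldsymbol z^0\sim p_0}\bigl(f_K\circ\cdots\circ f_1(\boldsymbol z^0)\in\textstyle\bigcup_k\mathcal Z_k\bigr),
\]
and the same inclusion argument goes through unchanged.
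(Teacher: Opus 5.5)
Your proof is correct and is essentially the paper's argument: you show $\bigcup_{k}\mathcal Z_k\subseteq E$ using a witness index $k_0$ and the bound $\min_k\|\mu_\theta(\boldsymbol z,x)-\alpha_k\|\le\|\mu_\theta(\boldsymbol z,x)-\alpha_{k_0}\|$, then apply monotonicity of $\mathbb P_{p(\cdot\mid c)}$. Your additions are valid refinements rather than a different route: finiteness of the index set gives the reverse inclusion, so in fact $\mathbb P(E\mid c)=\alpha(c)$, and the sum form of $\alpha(c)$ requires disjointness of the $\mathcal Z_k$, which the paper takes as a standing assumption while you derive it from $\Delta>2\varepsilon$.
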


\begin{proof}
If $\boldsymbol z\in\bigcup_{k=1}^K\mathcal Z_k$, then by the definition of union
there exists some $k\in\{1,\dots,K\}$ such that $\boldsymbol z\in\mathcal Z_k$.
By equation~\ref{eq:def_zk}, $\boldsymbol z\in\mathcal Z_k$ implies $\|\mu_\theta(\boldsymbol z,x)-\alpha_k\|\le\varepsilon$.
Therefore, $\min_{1\le j\le K}\|\mu_\theta(\boldsymbol z,x)-\alpha_j\|
\le
\|\mu_\theta(\boldsymbol z,x)-\alpha_k\|
\le \varepsilon$,
which means $\boldsymbol z\in E$ by the definition of $E$ in
\eqref{eq:eventE}. Hence,  $\left\{\boldsymbol z\in\bigcup_{k=1}^K\mathcal Z_k\right\}\subseteq E$.
Taking probability under $p(\boldsymbol z\mid c)$ and using the monotonicity
 of probability measures yields
\begin{equation}
    \mathbb P(E\mid c)\ge
\mathbb P\!\left(\boldsymbol z\in\bigcup_{k=1}^K\mathcal Z_k \,\middle|\, c\right)
=
\alpha(c).
\end{equation}
\end{proof}

\subsubsection{NF Pullback Mass and Partition Selection}
\label{app:nf_pullback_formal}

\begin{assumption}[NF Mass Concentration]
\label{ass:nf_concentration}
Let $\epsilon\sim\mathcal N(0,I)$ and
$\boldsymbol z=f_\psi(\epsilon;c)$ be an invertible Normalizing Flow
parameterized by $\psi$, conditioned on context $c$.
Let $\delta\in(0,1)$ be a tolerance parameter.
Assume that for the given context $c$ there exists an index
$k(c)\in\{1,\dots,K\}$ such that
\begin{equation}
\int_{f_\psi^{-1}(\mathcal Z_{k(c)};c)}
\mathcal N(\epsilon;0,I)\,d\epsilon
\;\ge\;
1-\delta.
\label{eq:ass_nf_concentration}
\end{equation}
\end{assumption}

\begin{proposition}[NF Coverage under Assumption~\ref{ass:nf_concentration}]
\label{prop:nf_pullback}
Let $p_\psi(\boldsymbol z\mid c)$ denote the conditional distribution
induced by the Normalizing Flow $f_\psi(\cdot;c)$.
Define the NF coverage as
\begin{equation}
\alpha_{\mathrm{NF}}(c)
\triangleq
\mathbb P_{p_\psi(\cdot\mid c)}
\!\left(
\boldsymbol z\in\bigcup_{k=1}^K \mathcal Z_k
\right)
=
\sum_{k=1}^K
\mathbb P_{p_\psi(\cdot\mid c)}(\boldsymbol z\in\mathcal Z_k).
\label{eq:def_alpha_nf}
\end{equation}
Under Assumption~\ref{ass:nf_concentration},
the coverage satisfies
\begin{equation}
\alpha_{\mathrm{NF}}(c)\;\ge\;1-\delta.
\end{equation}
\end{proposition}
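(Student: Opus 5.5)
The argument is a change of variables, $\mathbf z=f_\psi(\epsilon;c)$, followed by monotonicity of probability.

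First I make the pushforward explicit. By construction $p_\psi(\cdot\mid c)$ is the law of $\mathbf z=f_\psi(\epsilon;c)$ with $\epsilon\sim\mathcal N(0,I)$. So for any measurable $A\subseteq\mathbb R^d$,
\begin{equation*}
\mathbb P_{p_\psi(\cdot\mid c)}(\mathbf z\in A)=\int_{f_\psi^{-1}(A;c)}\mathcal N(\epsilon;0,I)\,d\epsilon .
\end{equation*}
This identity holds simply because $\{f_\psi(\epsilon;c)\in A\}=\{\epsilon\in f_\psi^{-1}(A;c)\}$. No Jacobian computation is needed, which is why I work at the level of events rather than through the density formula \eqref{eq:prior_log_prob}. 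Measurability of each $\mathcal Z_k$ follows because $\mathcal Z_k$ is the preimage of a closed ball under the continuous map $\mu_\theta(\cdot,x)$, so it is closed.

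Next I apply this identity with $A=\mathcal Z_{k(c)}$. Assumption~\ref{ass:nf_concentration} then gives directly
\begin{equation*}
\mathbb P_{p_\psi(\cdot\mid c)}(\mathbf z\in\mathcal Z_{k(c)})\ge 1-\delta .
\end{equation*}
Since $\mathcal Z_{k(c)}\subseteq\bigcup_{k}\mathcal Z_k$, monotonicity of probability yields $\alpha_{\mathrm{NF}}(c)\ge 1-\delta$.

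The second equality in \eqref{eq:def_alpha_nf}, which writes the coverage as a sum over $k$, uses that the regions are pairwise disjoint. This is exactly Proposition~\ref{prop:latent_partition} under $\Delta>2\varepsilon$, and it is also the standing assumption stated after \eqref{eq:def_zk}. If one prefers to avoid that assumption, I would argue with the union directly: the union bound holds in either direction as needed, and the lower bound only requires the single term $k(c)$, since all summands are nonnegative.

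Finally, as a corollary, I can combine the result with Proposition~\ref{prop:coverage_implies} to obtain $\mathbb P(E\mid c)\ge 1-\delta$.

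There is no real obstacle here. The only point needing care is justifying the pushforward identity and the measurability of the $\mathcal Z_k$. Invertibility of $f_\psi$ is not even required for the preimage identity; it matters only for relating the result to the density formula.
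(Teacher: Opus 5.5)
Your proof is correct and follows essentially the same route as the paper: the pushforward identity $\mathbb P_{p_\psi(\cdot\mid c)}(\boldsymbol z\in S)=\int_{f_\psi^{-1}(S;c)}\mathcal N(\epsilon;0,I)\,d\epsilon$ is applied to $\mathcal Z_{k(c)}$ and combined with Assumption~\ref{ass:nf_concentration}. The only cosmetic difference is that the paper lower-bounds the sum over $k$ by its $k(c)$ term using nonnegativity, whereas you use monotonicity on the union, which has the small advantage of not relying on the disjointness behind the second equality in the definition of $\alpha_{\mathrm{NF}}(c)$.
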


\begin{proof}
Since $\boldsymbol z=f_\psi(\epsilon;c)$ is an invertible measurable map,
for any measurable set $S\subseteq\mathbb R^d$ we have the change-of-variables
identity
\begin{equation}
\mathbb P_{p_\psi(\cdot\mid c)}(\boldsymbol z\in S)
=
\mathbb P(\epsilon\in f_\psi^{-1}(S;c))
=
\int_{f_\psi^{-1}(S;c)}
\mathcal N(\epsilon;0,I)\,d\epsilon.
\label{eq:pullback_general}
\end{equation}
In particular, for each decoder-induced region $\mathcal Z_k$,
\begin{equation}
\mathbb P_{p_\psi(\cdot\mid c)}(\boldsymbol z\in \mathcal Z_k)
=
\int_{f_\psi^{-1}(\mathcal Z_k;c)}
\mathcal N(\epsilon;0,I)\,d\epsilon.
\label{eq:pullback_zk}
\end{equation}

By definition~\eqref{eq:def_alpha_nf} and non-negativity of probabilities, $\alpha_{\mathrm{NF}}(c)
= \sum_{k=1}^K
\mathbb P_{p_\psi(\cdot\mid c)}(\boldsymbol z\in\mathcal Z_k)
\;\ge\;
\mathbb P_{p_\psi(\cdot\mid c)}(\boldsymbol z\in\mathcal Z_{k(c)})$.
Applying~\eqref{eq:pullback_zk} with $k=k(c)$ and using
Assumption~\ref{ass:nf_concentration} yields
\begin{equation}
\alpha_{\mathrm{NF}}(c)
\;\ge\;
\int_{f_\psi^{-1}(\mathcal Z_{k(c)};c)}
\mathcal N(\epsilon;0,I)\,d\epsilon
\;\ge\;
1-\delta.
\end{equation}
\end{proof}

We introduce an explicit \emph{conditional assumption}: given a
context $\mathbf{c}$, the NF, during inference-time sampling,
concentrates its probability mass (up to a tolerance $\delta$) on a single
latent partition implicitly induced by the decoder. Under this assumption, we establish a rigorous probabilistic guarantee:
the probability that NF sampling falls into a latent region that can be
decoded into a true conditional mode is at least $1-\delta$.
This result formalizes the role of NF in our framework: it acts as a
\emph{geometry-aware sampling mechanism} that, without participating in
training or regularization, reliably steers generation toward
high-density output modes already captured by the decoder.

\begin{proposition}[Generation Consistency under NF Sampling]
\label{cor:gen_consistency}
Assume the conditions of Proposition~\ref{prop:latent_partition}.
Let $\hat{\boldsymbol z}\sim p_\psi(\boldsymbol z\mid c)$ and
$\hat y=\mu_\theta(\hat{\boldsymbol z},x)$.
If Proposition~\ref{prop:nf_pullback} holds with parameter $\delta$,
then
\begin{equation}
\mathbb P\!\left(
\min_{1\le k\le K}\|\hat y-\alpha_k\|\le\varepsilon
\ \middle|\ c
\right)
\ge 1-\delta.
\end{equation}
\end{proposition}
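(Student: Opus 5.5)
The plan is to chain Proposition~\ref{prop:coverage_implies} with Proposition~\ref{prop:nf_pullback}, specialising the generic sampling distribution $p(\boldsymbol z\mid c)$ to the NF-induced prior $p_\psi(\boldsymbol z\mid c)$. First, with $\hat{\boldsymbol z}\sim p_\psi(\cdot\mid c)$ and $\hat y=\mu_\theta(\hat{\boldsymbol z},x)$, the event in the statement is exactly the decoding-success event
\[
E=\Bigl\{\min_{1\le k\le K}\|\mu_\theta(\hat{\boldsymbol z},x)-\alpha_k\|\le\varepsilon\Bigr\}
\]
of \eqref{eq:eventE}, now evaluated at the random latent $\hat{\boldsymbol z}$. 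So the goal reduces to showing $\mathbb P(E\mid c)\ge 1-\delta$ under $p_\psi$.

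Second, we apply Proposition~\ref{prop:coverage_implies} with $p(\cdot\mid c)=p_\psi(\cdot\mid c)$. Since that result holds for an arbitrary conditional sampling distribution, it gives $\mathbb P(E\mid c)\ge\alpha_{\mathrm{NF}}(c)$. Here the coverage \eqref{eq:coverage} coincides with the NF coverage \eqref{eq:def_alpha_nf}. The underlying fact is just the set inclusion $\bigcup_k\mathcal Z_k\subseteq E$ together with monotonicity of probability.

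Third, Proposition~\ref{prop:nf_pullback}, which holds with parameter $\delta$ under Assumption~\ref{ass:nf_concentration}, gives $\alpha_{\mathrm{NF}}(c)\ge 1-\delta$. Chaining the two inequalities yields the claim.

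The role of the hypotheses of Proposition~\ref{prop:latent_partition} ($L$-Lipschitz decoder, $\Delta>2\varepsilon$) needs a brief comment. They guarantee that the regions $\mathcal Z_k$ are pairwise disjoint. This justifies writing the coverage as a sum $\sum_k\mathbb P(\boldsymbol z\in\mathcal Z_k)$ in \eqref{eq:coverage} and \eqref{eq:def_alpha_nf}, and it makes the identified mode unique, so $\hat y$ is within $\varepsilon$ of exactly one $\alpha_k$. The lower bound itself only needs the union inclusion, so disjointness is used only for consistency of the definitions.

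The only step needing care is measurability: $E$ and $\mathcal Z_k$ must be measurable for the probabilities to be defined. Each $\mathcal Z_k$ is the preimage of a closed ball under the continuous (indeed Lipschitz) map $\mu_\theta(\cdot,x)$, so it is closed. $E$ is then a finite union of closed sets. Finally, the pushforward identity \eqref{eq:pullback_general} applies because $f_\psi(\cdot;c)$ is a measurable bijection. No further obstacle is expected.
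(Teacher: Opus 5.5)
Your proposal is correct and matches the paper's proof: specialise Proposition~\ref{prop:coverage_implies} to $p_\psi(\cdot\mid c)$ to get $\mathbb P(E\mid c)\ge\alpha_{\mathrm{NF}}(c)$, chain it with $\alpha_{\mathrm{NF}}(c)\ge 1-\delta$ from Proposition~\ref{prop:nf_pullback}, and identify $E$ with the target event via $\hat y=\mu_\theta(\hat{\boldsymbol z},x)$. Your remarks on disjointness and measurability are accurate additions (in fact $E=\bigcup_k\mathcal Z_k$ exactly), though the bound itself needs only the inclusion and monotonicity.
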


\begin{proof}
By Proposition~\ref{prop:coverage_implies},
\(
\mathbb P(E\mid c)\ge \alpha_{\mathrm{NF}}(c)
\).
By Proposition~\ref{prop:nf_pullback}, $\alpha_{\mathrm{NF}}(c)\ge 1-\delta$.
Combining yields $\mathbb P(E\mid c)\ge 1-\delta$.
Finally, $E$ is exactly the event
$\min_k\|\mu_\theta(\hat{\boldsymbol z},x)-\alpha_k\|\le\varepsilon$,
and $\hat y=\mu_\theta(\hat{\boldsymbol z},x)$ completes the proof.
\end{proof}

This proposition shows that once the decoder induces mode-aligned latent partitions, it suffices for the Normalizing Flow to select any one of these partitions with high probability during sampling; the generated output will then fall into the corresponding high-density mode of the true conditional distribution with the same probability, ensuring consistent and non-averaged generation.

\subsection{Proof of AR Limitations}
\label{app:arlim}

We provide a bias variance decomposition of the expected squared regression error for tokenized AR models following \cite{ma2026gor}.

\paragraph{Setup and Notation.}
Consider a continuous target reconstructed from a sequence of value tokens:
$y \triangleq \sum_{t=1}^{T} \phi(s^{t})$,
where $s^{t}$ denotes the ground-truth token at step $t$ and $\phi(\cdot)$ maps a token to its numeric value.
The AR model predicts a token sequence $\hat{s}^{1:T}$, yielding the prediction
$\hat{y} \triangleq \sum_{t=1}^{T} \phi(\hat{s}^{t})$.

Define the numeric token values
$C^{t} \triangleq \phi(s^{t}), \hat{C}^{t} \triangleq \phi(\hat{s}^{t})$,
and the step-wise prediction error
$\Delta_t \triangleq \hat{C}^{t} - C^{t}$. By assumption, all token values are bounded:
$C^{t}, \hat{C}^{t} \in [w_{\min}, w_{\max}]$, and the step-wise bias satisfies
$\lvert \mathbb{E}[\Delta_t] \rvert \le B, \forall t$.

\paragraph{Error Decomposition.}
The squared regression error can be written as
\begin{equation}
\mathbb{E}\!\left[(\hat{y}-y)^2\right]
=
\mathbb{E}\!\left[
\left(\sum_{t=1}^{T} \Delta_t \right)^2
\right].
\end{equation}
Using the bias--variance decomposition, we obtain
\begin{equation}
\mathbb{E}\!\left[
\left(\sum_{t=1}^{T} \Delta_t \right)^2
\right]
=
\left(\sum_{t=1}^{T} \mathbb{E}[\Delta_t]\right)^2
+
\mathbb{V}\!\left(\sum_{t=1}^{T} \Delta_t\right).
\end{equation}

\paragraph{Bias Term.}
Let $b_t \triangleq \mathbb{E}[\Delta_t]$. Since $|b_t| \le B$, we have
\begin{equation}
\left(\sum_{t=1}^{T} b_t\right)^2
\le
T^2 B^2.
\end{equation}

\paragraph{Variance Term.}
The variance term expands as
\begin{equation}
\mathbb{V}\!\left(\sum_{t=1}^{T} \Delta_t\right)
=
\sum_{t=1}^{T} \mathbb{V}(\Delta_t)
+
\sum_{t \neq t'} \mathrm{Cov}(\Delta_t, \Delta_{t'}).
\end{equation}
Applying the Cauchy--Schwarz inequality yields
\begin{equation}
\sum_{t \neq t'} \mathrm{Cov}(\Delta_t, \Delta_{t'})
\le
\frac{T(T-1)}{2}
\max_t \mathbb{V}(\Delta_t).
\end{equation}

Since $\Delta_t = \hat{C}^{t} - C^{t}$ and both $\hat{C}^{t}$ and $C^{t}$ lie in $[w_{\min}, w_{\max}]$,
Popoviciu’s inequality gives
\begin{equation}
\mathbb{V}(\Delta_t) \le \frac{(w_{\max}-w_{\min})^2}{4}.
\end{equation}
Therefore,
\begin{equation}
\mathbb{V}\!\left(\sum_{t=1}^{T} \Delta_t\right)
\le
T^2 \cdot \frac{(w_{\max}-w_{\min})^2}{4}.
\end{equation}

\paragraph{Final Bound.}
Combining the bias and variance bounds, we obtain
\begin{equation}
\mathbb{E}\!\left[(\hat{y}-y)^2\right]
\le
T^{2} B^{2}
+
T^{2}\frac{(w_{\max}-w_{\min})^{2}}{4}. \qed
\end{equation}

\paragraph{Vocabulary-induced trade-off.}
Both the sequence length $T$ and the step-wise bias bound $B$ are induced by the vocabulary design.
Let $\mathcal{V}$ denote the value-token vocabulary and $\phi(\mathcal{V}) \subset [w_{\min}, w_{\max}]$ its numeric range.
A finer-grained vocabulary yields smaller token magnitudes and typically requires a longer sequence to represent the same target, leading to larger $T$.
Conversely, a coarser vocabulary reduces $T$ but increases discretization error at each step, enlarging the attainable bias bound $B$.
In particular, since $\Delta_t=\phi(\hat{s}^{t})-\phi(s^{t})$ and $\phi(s^{t}),\phi(\hat{s}^{t})\in[w_{\min},w_{\max}]$, we have
$|\Delta_t| \le w_{\max}-w_{\min}
\quad \Rightarrow \quad
|\mathbb{E}[\Delta_t]| \le B \le w_{\max}-w_{\min}$.
Therefore, tokenized AR regression is intrinsically constrained by a vocabulary-dependent trade-off between $T$ and $B$, which directly controls the error bound in Proposition~\ref{pro:ar_error}.

\end{document}